\newcommand{\nodialogue}[1]{}
\newtheorem{theorem}{Theorem}[section]
\newtheorem{lemma}[theorem]{Lemma}
\newcommand{\ld}[1]{\textcolor{black}{#1}}
\newcommand{\ldr}[1]{\textcolor{black}{#1}}
\newcommand{\wb}[1]{\textcolor{black}{#1}}
\newcommand{\wbr}[1]{\textcolor{black}{#1}}
\newcommand{\wt}[1]{\textcolor{black}{#1}}
\newcommand{\wtr}[1]{\textcolor{black}{#1}}
\DeclareMathOperator*{\argmax}{arg\,max}
\DeclareMathOperator*{\argmin}{arg\,min}
\DeclarePairedDelimiterX{\klx}[2]{(}{)}{%
  #1\;\delimsize\|\;#2%
}
\newcommand{\kl}{KL\klx}
\newcommand{\e}[2]{{\mathbb E}_{#1}\left[ #2 \right]}
\begin{document}
%
\title{Bayesian Estimate of Mean Proper Scores for Diversity-Enhanced Active Learning}
%
%
%
%

\author{Wei Tan,
        Lan Du \IEEEmembership{Senior Member, IEEE},
        Wray Buntine
\IEEEcompsocitemizethanks{
\IEEEcompsocthanksitem Wei Tan and Lan Du are with Dept.\ of Data Science and AI,
Monash University, Clayton, Australia. 
E-mail: Lan.Du@monash.edu
\IEEEcompsocthanksitem Wray Buntine is now at VinUniversity, Hanoi, Vietnam.
\IEEEcompsocthanksitem Correspondence author: Lan Du.
}

\thanks{Manuscript received June 30th, 2022}}

%
%

\markboth{Journal of \LaTeX\ Class Files,~Vol.~XX, No.~X, August~20XX}%
{Shell \MakeLowercase{\textit{et al.}}: Bare Demo of IEEEtran.cls for Computer Society Journals}
%



\IEEEtitleabstractindextext{%
\begin{abstract}
\justifying
The effectiveness of active learning largely depends on the sampling efficiency of the acquisition function. Expected Loss Reduction (ELR) focuses on a Bayesian estimate of the reduction in classification error, and more general costs fit in the same framework. We propose Bayesian Estimate of Mean Proper Scores (BEMPS) to estimate the increase in strictly proper scores such as log probability or negative mean square error within this framework. We also prove convergence results for this general class of costs. 
To facilitate better experimentation with the new acquisition functions, we develop a complementary batch AL algorithm that encourages diversity in the vector of expected changes in scores for unlabeled data. 
To allow high-performance classifiers, we combine deep ensembles, and dynamic validation set construction on pretrained models, and further speed up the ensemble process with the idea of Monte Carlo Dropout. 
Extensive experiments on both texts and images show that the use of mean square error and log probability with BEMPS yields robust acquisition
functions and well-calibrated classifiers, and consistently outperforms the others tested. 
The advantages of BEMPS over the others are further supported by a set of qualitative analyses, where we visualise their sampling behaviour using data maps and t-SNE plots.

\end{abstract}

\begin{IEEEkeywords}
active learning, machine learning, artificial intelligence, text classification, image classification.
\end{IEEEkeywords}
}

\maketitle

\IEEEdisplaynontitleabstractindextext

%
\IEEEpeerreviewmaketitle

\section{Introduction}\label{sec:introduction}

%
%
%
%

\IEEEPARstart{C}{lassification} is widely used and its performance has substantially improved due to the advent of deep learning.
However, a significant obstacle to its 
utilization is the paucity of labeled or annotated data.
\ldr{
The data annotation process performed by domain experts is often time-consuming, expensive 
and tedious, 
especially in the medical field,
due to the need for expertise combined with privacy concerns \cite{jimenez2018capsule}.}
Active Learning (AL) offers a solution by judiciously selecting the most informative data points for annotation, ensuring efficient use of a limited annotation budget \cite{settles2009active}.
The elegance of AL lies in that it primarily involves an expert assigning a class label. Yet, even for a simple task such as classification, a generally accepted theory for AL remains elusive.

\ldr{
Creating an appropriate acquisition function is the central aspect of AL.
In the AL literature,
acquisition functions are predominantly classified into two major categories:} uncertainty-based methods and diversity-based methods.
The former aims to select samples with the highest predictive uncertainty of a model. This uncertainty can be measured by metrics such as the least confident (LC) \cite{1642660, settles2009active}, entropy \cite{4563068, settles2009active}, mutual information \cite{KAJvAYG2019} and other methods. 
However, uncertainty-based methods are prone to adversarial examples \cite{pop2018deep} and noisy datasets, 
often resulting in 
\ldr{sampling redundant and even biased instances,}
thus degrading the training efficiency and the model performance.  
Conversely, diversity-based methods strive to acquire samples that best capture the data distribution of the entire unlabeled set \cite{yuan-etal-2020-cold,sener2017active}.
Their success largely depends on the representations of the unlabeled data used in k-Means clustering.
For example, 
ALPS and BERT-KM \cite{yuan-etal-2020-cold} compute surprisal embeddings from
the predictive word probabilities generated by a pretrained language model.
\ldr{Even though existing methods exhibit promising performance in different tasks,
we argue that diversity alone without considering uncertainty or vice versa is sub-optimal, particularly when the estimated uncertainty or the computed embeddings have nothing to do with the model performance \cite{schroder2020survey, ren2020survey}.}

Recently, some batch AL approaches focus on the uncertainty-diversity trade-off with hybrid methods \cite{ren2020survey}. 
One notable method is BADGE \cite{ash2019deep}, which uses the gradient representation generated by the last layer of the model to incorporate prediction uncertainty and sample diversity.
\ldr{While BADGE could achieve a balance between uncertainty and diversity without requiring adjustments to model hyperparameters, it faces the challenge of time-consuming distance calculations in the gradient space \cite{yuan-etal-2020-cold}.
Nevertheless, many hybrid approaches prioritize diversifying samples through the data representation of deep neural networks (DNNs), often neglecting prediction uncertainty.}

\ldr{In formulating AL systems using DNNs, especially in real-world scenarios, it is essential to consider 
a comprehensive list of factors in a holistic manner.}
For instance, one should account for the costs involved in both retraining a deep neural network (DNN) and using validation data for training \cite{siddhant-lipton-2018-deep}. The role played by transformer language models \cite{https://doi.org/10.48550/arxiv.2004.13138} is also essential, as is understanding the intricacies of batch mode AL, especially with an emphasis on diversity \cite{ren2020survey}. 
Moreover, one shouldn't overlook the necessary expertise and associated professional costs \cite{gao2020cost, zhdanov2019diverse}. 
When designing experiments, it's vital to consider the feasible size of the validation set.
\ldr{In many real-world scenarios, practitioners can be constrained to an expert annotation budget of, for instance, 1000 data samples, due to the inherent cost of experts.}
This limitation suggests that larger validation set sizes, as observed in \cite{ash2019deep,NEURIPS2019_84c2d486}, might not always be feasible.

\ld{With these desiderata, we propose a new set of acquisition functions based on the principle of strictly proper scoring rules and develop a hybrid approach that considers both uncertainty and diversity, for effective batch active learning using DNNs.}
\wb{As our first step, frameworks for cost reduction due to data acqusition are 
the expected value of sample information (EVSI) \cite{raiffa1961applied} and similarly
mean objective cost of uncertainty (MOCU) \cite{yoon2013quantifying}. We develop our AL model using strictly proper scoring
rules or Bregman divergences \cite{10.5555/1046920.1194902} within these frameworks. MOCU applied to errors required modifications
of the formula, resulting in WMOCU \cite{ZhaoICLR21} and SMOCU \cite{zhao2021bayesian}, in order to achieve convergence
and avoid getting stuck in error bands. }
\ld{
In contrast, strictly proper scoring rules avoid this problem by
utilizing expected scores generated by strictly convex functions.
The convexity of the scoring functions guarantees convergence of AL.}
The scoring rules can be adapted to different inference tasks (e.g., different utilities, precision-recall trade-offs, etc.). 
This property is preferable and beneficial for applications such as medical domains
where measures other than errors are relevant for an inference task.

This work represents a substantial extension of our previous NeurIPS 2021 conference paper BEMPS \cite{BEMPS_Wei_NEURIPS2011}, where we developed a family of novel acquisition functions based on strictly proper scoring rules for categorical variables, which generalize existing functions based on ELR and BALD. It was then 
instantiated with two scoring functions, namely CoreMSE and CoreLog.
The contributions of this extension are summarised as follows:
\begin{enumerate}[topsep=0pt, partopsep=0pt, noitemsep, leftmargin=*]
\item \textbf{Theoretical properties of BEMPS}: \wb{We provide a detailed proof demonstrating that active learning with the Bayesian estimate of mean strictly proper scores guarantees convergence.  
Moreover, we show that our acquisition function measures epistemic uncertainty, not aleatoric uncertainty.}
\item \textbf{Comprehensive quantitative experimental results}: 
\ld{Besides the results reported in \cite{BEMPS_Wei_NEURIPS2011},
we further validate BEMPS' performance on the image classification task, using two benchmark datasets, namely, MINST and CIFAR10.
Additionally, we perform an analysis of model calibration, measured by the expected calibration error.}
\wb{This also lets us explore the computational properties of our methods and two different ensembling techniques, deep ensembles and Monte-Carlo dropout.}
\item \textbf{Qualitative analysis of sampling behaviors}: 
 \ld{We further employ visualization techniques (i.e., Data Maps \cite{swayamdipta2020dataset} and t-SNE \cite{JMLR:v9:vandermaaten08a} ) to delve into the sampling behavior of different AL methods, in order to gain insight into the underlying mechanism that drives the acquisition of samples from different regions (i.e., the easy-to-learn, the ambiguous and the hard-to-learn regions) and along the decision boundaries within the sample space.}

\end{enumerate}

\section{Related Work}

Active learning has been researched extensively since the emergence of deep learning,
generating significant interest. 
As elucidated by previous studies\cite{settles2009active, ren2020survey, schroder2020survey, ZhaoICLR21, zhao2021bayesian, NEURIPS2021_50d2e70c, NEURIPS2021_4afe0449}, 
there is no effective universal AL method. 
Instead, researchers often rely on heuristics tailored to their specific tasks.
\ldr{In this context, we review existing acquisition functions (aka query strategies) 
proposed in some recent AL models that are most related to ours.
These models cover a spectrum of perspectives, including
uncertainty-based, diversity-based and hybrid methods.
Readers interested in a thorough discourse on contemporary strategies 
are directed to \cite{ren2020survey} for a comprehensive discussion.}

\textbf{Uncertainty-based methods:} 
The proposed BEMPS \cite{BEMPS_Wei_NEURIPS2011}, as a general Bayesian model for acquisition functions, 
quantifies the model uncertainty using the theory of (strictly) proper scoring rules for categorical
variables \cite{doi:10.1198/016214506000001437}.
Thus, we first review existing acquisition functions 
proposed in some recent uncertainty-based AL models that are most related to ours.
One common and straightforward heuristic often used in AL is maximum entropy \cite{zhu2009active,zhu2012uncertainty,wang2014new,gal2017deep},
where one chooses samples that maximize the predictive entropy.
Similarly, Bayesian AL by disagreement (BALD) \cite{Houlsby2011} and its batch version (i.e.,
BatchBALD) \cite{KAJvAYG2019} instead compute the mutual information between the model predictions
and the model parameters, which indeed chooses samples that maximize the decrease in
expected entropy \cite{NEURIPS2019_84c2d486}.
\wb{Improving on ELR \cite{RoyMcC2001}, 
MOCU with minimum errors was extended by WMOCU using weights \cite{ZhaoICLR21}, also obtaining a theoretical guarantee of convergence.}
WMOCU directly minimizes the objective uncertainty impacting classification by tuning a hyperparameter of the weighting function.
\ld{
Published alongside WMOCU, SMOCU \cite{zhao2021bayesian} achieves convergence through the utilization of a log-sum-exp function to approximate the max function in computing the OBC error. This adaptation renders the MOCU function strictly concave.}
Without modifying the EVSI formula, 
BEMPS uses proper scores directly in the expected value of evidence.
\ld{
This requires no additional manipulation of the formula needed by the MOCU-based methods to achieve convergence.}

\textbf{Diversity-based and hybrid methods:} 
Query strategies considering just uncertainty do not work well in a batch setting because similar samples can be acquired in one batch \cite{WangZengmao2016Abal, ren2020survey,ma2021active}.
To overcome this problem, there have been many AL methods that
achieve batch diversity by acquiring samples that are both informative and diverse, such as \cite{Nguyen2004,yin2017deep, PengLiu2017ADLf,ash2019deep, yuan-etal-2020-cold,zhdanov2019diverse, shi-etal-2021-diversity}.
For instance,
BADGE \cite{ash2019deep} and ALPS \cite{yuan-etal-2020-cold} 
are the two recent AL methods focusing on batch diversity. 
BADGE uses gradient embeddings of unlabeled samples as inputs of $k$-MEANS++ to select a set of diverse samples, which relies on fine-tuning pretrained models.
Inspired by BADGE,
ALPS uses surprisal embeddings computed from
the predictive word probabilities generated by a masked language model for cold-start AL.
Whereas our BEMPS computes an embedding vector for a candidate sample data by the expected change in the proper scores induced by acquiring a label for the data.  This is a vector because it is computed for each unlabeled sample in a pool, answering the question ``how are proper scores expected to change across the pool if we acquire this one label?"


\textbf{Ensemble methods:} Ensemble methods combine multiple distinct models to improve generalization performance. 
They are now widely utilized in machine learning \cite{he2016deep, krizhevsky2012imagenet}. 
For example,
\cite{Lakshminarayanan2017, pawlowski2017efficient} uses ensembles to estimate the uncertainty of DNN predictions in the context of outlier detection and reinforcement learning.
In recent years, ensembles are also used to obtain better uncertainty estimates with deep learning, including deep ensembles \cite{Lakshminarayanan2017} and Monte-Carlo dropout (MC-Dropout) \cite{gal2016dropout,KAJvAYG2019,pop2018deep}.
Deep ensembles combine the benefits of both deep learning and ensemble learning to produce a model with superior generalization performance and uncertainty estimation, but they are very expensive to train and evaluate.
MC-Dropout is a well-known ensemble method that is less costly but less reliable \cite{durasov2021masksembles}.
BatchBALD utilizes the MC-dropout technique to approximate inference within Bayesian Neural Networks (BNNs). This approach aids in estimating uncertainty, enabling effective scalability to handle high-dimensional image inputs.

\textbf{Validation setup:}
In experiments, 
using a validation set to train deep learning models in active learning is not uncommon, for instance, for early-stopping.
Some existing methods assume that there
is a large validation set available a priori
\cite{ash2019deep, KAJvAYG2019, gal2017deep},
which means the cost of labeling the validation set is not factored into the labeling budget.
We argue that the availability of a separate validation set is impractical
in real-world AL scenarios.
Although
\cite{yuan-etal-2020-cold} uses fixed epochs to train the classifier without a validation set to save the cost, 
 the classifier could either be under-fit or over-fit.
 We instead use a dynamic approach to generate alternative validation sets
 from the ever-increasing labeled pool after each iteration.

\section{Bayesian Estimate of Mean Proper Scores}
\label{bemps}

We first review the general Bayesian model for acquisition functions, 
including ELR, MOCU and BALD;
\ld{and provide the preliminaries of proper scoring rules.}
We then develop BEMPS 
a new uncertainty quantification framework with a theoretical foundation  based on 
strictly proper scoring rules \cite{doi:10.1198/016214506000001437}.

\subsection{Expected Loss Reduction}
The ELR method attempts to quantify the generalization error that is expected to decrease when one new sample is added to the labeled dataset. \wb{
MOCU \cite{yoon2013quantifying} is similar to a broader category of scoring systems known as the value of information \cite{raiffa1961applied}.  Most relevant for us is the concept of the expected value of sample information (EVSI) \cite{raiffa1961applied}. This concept explores how the cost measure can be anticipated to improve upon acquiring a single sample point. Our distinct contribution lies in adopting a proper scoring rule as the designated cost metric within the EVSI framework.
}

\begin{table}[t]
  \caption{Notation}
  \label{tab:EER-notions}
  \centering
  \small
  {\color{black}
  \begin{tabular}{lp{0.6\linewidth}}
    \toprule
     Notation & Description  \\
    \cmidrule(r){1-2}
      $\vb{L}$ & a set of labeled data \\
    $\vb{U}$ & a set of unlabeled data \\
    $\vb{X}$ & a set of unlabeled data used to estimate the expected score change  \\
     $\vb{x}$ & a unlabeled sample from $U$  \\
     $y$ & the class label \\
     $\vb{x}'$ & a unlabeled sample sampled from $X$ \\
     $\vb*{\theta}$ & the model parameters \\
     $\vb{\Theta}$ & a set of model parameters \\ 
        $\mathbb{E}_{\mu}[f(\mu,\nu)]$ & the expectation of a function $f(\mu,\nu)$ 
        with respect to a random variable $\mu$ \\
    $p(y|\vb*{\theta}, \vb{x})$ & the probability  of label $y$ for an instance $x$ on running model $\vb*\theta$ \\
    $p(\vb*\theta | \vb{L})$ & the probability of $\vb*{\theta}$ based on learning from  $\vb{L}$  \\
   $p(\vb*\theta | \vb{L}, (\vb{x}, y))$ & the probability of $\vb*{\theta}$ based on the learning from $\vb{L}$ and an additional labeled instance $(\vb{x}, y)$\\
    $p(y | \vb{L}, \vb{x})$ & the Bayesian optimal estimate of label $y$ based on $\vb*\theta$ after learning $p(\vb*\theta| \vb{L})$, i.e., $\e{p(\vb*\theta | \vb{L})}{p(y|\vb*{\theta}, \vb{x})}$\\
    $Q(\cdot)$ & the score that measures the quality of the posterior of a model \\
    $\Delta Q(\vb{x} | \vb{L})$ &  the expected score change by acquiring the label of $\vb{x}$, 
   i.e., the acquisition function in AL\\
      $S(\cdot)$ & a strictly proper scoring function \\
     $G(\cdot)$ & a strictly convex function computed as the expectation of $S(\cdot)$ \\
    \bottomrule
  \end{tabular}%
  }
\end{table}

Suppose models of our interest are parameterised by parameters $\vb*{\theta} \in \vb{\Theta}$,
$\vb{L}$ indicates labeled data,
probability of label $y$ for data $\vb{x}$ is given by $p(y|\vb*{\theta},\vb{x})$,
and $p(\cdot|\vb*{\theta},\vb{x})$ presents a vector of label probabilities.
With a fully conditional model, 
the posterior of $\vb*{\theta}$
is unaffected by unlabeled data, which means 
\begin{math}
  p(\vb*{\theta}|\vb{L},\vb{U})=p(\vb*{\theta}|\vb{L})
\end{math}
for any unlabeled data $\vb{U}$.
Moreover, 
we assume without loss of generality that this model family is well-behaved in a statistical sense, 
so the model is identifiable.
The ``cost" of the posterior $p(\vb*{\theta}|\vb{L})$ can be measured
by some functional $Q(p(\vb*{\theta}|\vb{L}))$, denoted $Q(\vb{L})$ for short, where $Q(\vb{L})\ge 0$ and $Q(\vb{L})=0$
when some convergence objective has been achieved.  For our model, this is
when $p(\vb*{\theta}|\vb{L})$ has converged to a point mass at a single model.  
A suitable objective function is to measure the expected decrease in $Q(\cdot)$ 
due to acquiring the label for a data point $\vb{x}$.
The corresponding acquisition function for AL is formulated 
\cite[Eq~(1)]{Houlsby2011}, \cite[Eq~(3)]{zhao2021bayesian} as
\begin{equation}
    \label{eq-dQ}
\Delta Q(\vb{x}|\vb{L}) = Q(\vb{L}) - \mathbb{E}_{p(y|\vb{L},\vb{x})} \big[Q(\vb{L}\cup \{(\vb{x},y)\}) \big],
\end{equation}
whereas for ELR the expression is split over an inequality sign \cite[Eq~(2)]{RoyMcC2001}.
It estimates how much the cost is expected to reduce when a new data point $x$ is acquired.  
Since the true label for the new data $\vb{x}$ is unknown a prior, 
we have to use expected posterior proportions
from our model, $p(y|\vb{L},\vb{x})$ to estimate the likely label.
For BALD \cite{Houlsby2011} using Shannon's entropy, $Q_I(\vb{L}) = I(p(\vb*{\theta}|\vb{L}))$, which measures uncertainty in the parameter space and thus has no strong relationship to actual errors \cite{ZhaoICLR21}.
MOCU\cite{ZhaoICLR21} is equivalent to ELR that uses a Bayesian regret given by the expected loss difference between the
optimal Bayesian classifier and the optimal classifier:
\begin{flalign}
&Q_{MOCU}(\vb{L}) = \\ \nonumber
&\mathbb{E}_{p(\vb{x})} \Big[ 
          \min_{y} (1-p(y|\vb{L},\vb{x})) 
          -  \mathbb{E}_{p(\theta|\vb{L})}  \big[\min_{y} (1-p(y|\vb*{\theta},\vb{x})) \big] \Big]. \label{eq-q-mocu}
\end{flalign} 

WMOCU uses a weighting function defined by Eq (11) in \cite{ZhaoICLR21} to have a more amenable definition of $\Delta Q(\vb{x|L})$ than the MOCU method. 
Although WMOCU guarantees $\Delta Q(\vb{x|L})$ converging to the optimal classifier (under minimum errors) according to the $Q(\vb{L})$ with the strictly concave function by Eq (15) in \cite{ZhaoICLR21}, 
the optimal approximation of the convergences can only be solved by controlling a hyperparameter of the weighting function manually. To allow theoretical guarantees of convergence under more general loss functions, 
we propose a different definition for $Q(\vb{L})$ based on strictly proper scoring rules.

\subsection{Proper Scoring Rules}
\label{ssct-PSR}
\ld{Proper scoring rules assess the quality of probabilistic forecasts and offer a measure of predictive uncertainty. 
They assign a numerical
score based on the predictive distribution and the predictive outcome,
favouring more accurate and calibrated forecasts over less accurate ones. We focus on scoring functions in which a higher numeric score indicates a better performance.
In this paper, we are interested in the quality of the probabilistic prediction of categorical variables, where proper scoring rules are often used in training a classification algorithm.
Readers interested in a more detailed discussion of proper scoring cores can refer to \cite{savage1971elicitation,eorms0749,merkle2013choosing,dawid2014theory, doi:10.1198/016214506000001437}.
}

\ld{
Let us consider the prediction of a categorical variable $y$ with the sample space $\Omega = \{0, \dots, K-1\}$.
In this context, a scoring function $S(p(\cdot | \vb*{\theta}, \vb{x}), y)$ 
evaluates the quality of the predictive distribution $p(\cdot | \vb*{\theta}, \vb{x})$ 
over $\Omega$
relative to 
an observed event $y | \vb{x} \sim q(y | \vb{x})$ where $q(\cdot)$ 
denotes the true distribution on $(y, \vb{x})$, 
which $p(\cdot)$ tries to estimate.
Drawing upon the insights of  \cite{savage1971elicitation} and \cite{doi:10.1198/016214506000001437},
it becomes apparent that a scoring rule is intricately linked to
a tangent line of a convex, real-valued function $G(p(\cdot))$  
at $p(\cdot)$ 
evaluated at $q(\cdot)$.
The derivation of the scoring rule pertaining to an observed event and
the expected scoring rule  follows in the subsequent manner:
\begin{eqnarray}
    S(p(\cdot), y)
     &=& G(p(\cdot)) + \langle\grad{G(p(\cdot))}, (\delta_y - p(\cdot)) \rangle \,,
     \\
    S(p(\cdot), q(\cdot))    &=& \e{y\sim q(\cdot)}{S(p(\cdot), y)}
    \label{eq:prop:exp1}\\
    &=& G( p(\cdot)) +
    \langle\grad{G( p(\cdot))},\, (q(\cdot) - p(\cdot)) \rangle\,, 
    \label{eq:prop:exp2}
\end{eqnarray} 
where $\grad{G(p(\cdot))}$ is a subgradient of $G(\cdot)$ at the point 
$p(\cdot)$.
Note that the expectation of a scoring rule according to the
supplied probability
$p(\cdot)$ takes a simple form  $G(p(\cdot))$, 
the proof of which is straightforward by 
replacing $y \sim q(\cdot)$ with  $y \sim p(\cdot)$ in Eq~\eqref{eq:prop:exp1};
and the second term in Eq~\eqref{eq:prop:exp2} will be zero.}

\ld{
The scoring rule is proper if $S(q(\cdot), q(\cdot)) \geq S(p(\cdot), q(\cdot))$ 
for all $p(\cdot), q(\cdot) \in \mathcal{P}$\footnote{$\mathcal{P}$ is a convex set of probability measures on $(\mathcal{A}, \Omega)$, where $\mathcal{A}$ is a $\sigma$-algebra of subsets of the sample space $\Omega$ (i.e., classes in classification).}, and  
it is strictly proper if the expected score is minimized if and only if $p(\cdot) = q(\cdot)$ \cite{doi:10.1198/016214506000001437}. 
With $G(\cdot)$ being a convex function, the proof of the inequality is derived as follows:
    \begin{eqnarray}
        S(q(\cdot), q(\cdot)) &=& G(q(\cdot)) \nonumber\\
            &\geq& G( p(\cdot)) + \langle\grad{G( p(\cdot))},\, (q(\cdot) - p(\cdot)) \rangle\nonumber\\
            &=& S(p(\cdot), q(\cdot)) \,,
    \end{eqnarray}
where the inequality holds because $G(\cdot)$ is a convex function that is always above its tangent line computed at $p(\cdot)$.
In the case where $G(\cdot)$ exhibits strict convexity, the previously outlined proper scoring rule transitions into a strictly proper scoring rule, as indicated by  \cite{doi:10.1198/016214506000001437}. }

A strictly proper scoring rule has the behaviour that in the limit of
infinite labeled data $\vb{L}_{n}$, as $n\rightarrow \infty$, the average score
$\frac{1}{n} \sum_{(\vb{x},y)\in \vb{L}_n} S(p(\cdot|\vb*{\theta},\vb{x}),y)$ has a unique maximum for $\vb*{\theta}$ at the
``true" model (for our identifiable model family).
\ld{
With these characteristics, one can tune $G(\cdot)$ for different tasks \cite{merkle2013choosing,dawid2014theory}.
For instance, \cite{Lakshminarayanan2017} used Gibbs inequality to prove that the scoring function corresponding to maximizing likelihood is indeed a proper scoring rule.
In the following sections, we will develop two acquisition functions using the Brier score and the logarithmic score that correspond to the mean square error and the Kullback–Leibler divergence.}

\subsection{Strictly Proper Scores for Active Learning}

With strictly proper scoring rules, 
we develop a generalized class of acquisition functions built using the posterior (i.e., w.r.t. $p(\vb*{\theta}|\vb{L})$) expected difference between the score for the Bayes optimal classifier and the score for the ``true" model.
This is inherently Bayesian due to the use of $p(\vb*{\theta}|\vb{L})$.
\begin{flalign} 
\begin{split}
&Q_S(\vb{L}) = \\ &\,\,\,\mathbb{E}_{ p(\vb{x}) p(\vb*{\theta}|\vb{L}) } \Big[
\mathbb{E}_{ p(y|\vb*{\theta},\vb{x}) } \big[
  S(p(\cdot|\vb*{\theta},\vb{x}),y)  - S(p(\cdot|\vb{L,x}),y)  \big] \Big] 
\end{split}\label{eq-QSS}, \\
&Q_S(\vb{L})  = \mathbb{E}_{ p(\vb{x}) p(\vb*{\theta}|\vb{L}) } \big[B(p(\cdot|\vb{L,x}),p(\cdot|\vb*{\theta},\vb{x})) \big] \label{eq-QB} , \\
&Q_S(\vb{L})  =\mathbb{E}_{ p(\vb{x})}\Big[ \mathbb{E}_{ p(\vb*{\theta}|\vb{L}) }\big[G(p(\cdot|\vb*{\theta},\vb{x}))\big] 
 - G( p(\cdot|\vb{L,x}))\Big] \label{eq-QS} ,\\
\begin{split}
 &\Delta Q_S(\vb{x|L}) = \\ 
& \,\,\, \mathbb{E}_{ p(\vb{x'})} \Big[
  \mathbb{E}_{ p(y|\vb{L,x}) } \big[G( p(\cdot|\vb{L},(\vb{x},y),\vb{x'})) \big]
  - G( p(\cdot|\vb{L,x'})) \Big] 
\end{split} \label{eq-DQS} ,
\end{flalign}
where $\vb{x}'$ is a unlabeled sample, different from $\vb{x}$, which 
is used to estimate the expected score change $\Delta Q_x(\vb{x}|\vb{L})$.






The $Q_S(\vb{L})$ has three equivalent variations, 
one for an arbitrary strictly proper scoring rule $ S(q(\cdot),y)$ (Eq~\eqref{eq-QSS}),
one for a corresponding Bregman divergence $B(\cdot,\cdot)$ (Eq~\eqref{eq-QB})
and the
third for an arbitrary strictly convex function $G(\cdot)$
(Eq~\eqref{eq-QS}).
Their connections are given in \cite{doi:10.1198/016214506000001437}.
The form for $\Delta Q_S(\vb{x|L})$ corresponds to \cite[Equation~(6)]{zhao2021bayesian}. using the same simplification.

\begin{lemma}[Properties of scoring]\label{thm-nn}
In the context of a fully conditional classification model $p(y|\vb*\theta,\vb{x})$,
the $Q_I(\vb{L})$, $Q_{S}(\vb{L})$, $\Delta Q_I(\vb{x|L})$, $\Delta Q_{S}(\vb{x|L})$ as defined above are all non-negative.
\end{lemma}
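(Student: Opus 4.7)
The unifying tool is Jensen's inequality applied to the (strictly) convex function $G(\cdot)$ that generates the proper scoring rule. The two BALD claims will fall out as the log-score specialisation ($G=-H$, i.e.\ minus Shannon entropy, which is strictly convex) of the scoring-rule claims; equivalently one can read $\Delta Q_I(\vb{x}|\vb{L})$ directly as a conditional mutual information $I(y;\vb*\theta\,|\,\vb{L},\vb{x})$ averaged over $p(\vb{x})$, which is manifestly non-negative. So the substantive work is to establish $Q_S(\vb{L})\ge 0$ and $\Delta Q_S(\vb{x}|\vb{L})\ge 0$ using the convex-function form \eqref{eq-QS}--\eqref{eq-DQS}.

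For $Q_S(\vb{L})\ge 0$, I would start from \eqref{eq-QS} and observe that the Bayes-averaged predictive is a $\vb*\theta$-mixture of per-model predictives,
\begin{equation*}
p(\cdot|\vb{L},\vb{x}) \;=\; \mathbb{E}_{p(\vb*\theta|\vb{L})}\bigl[p(\cdot|\vb*\theta,\vb{x})\bigr].
\end{equation*}
Convexity of $G$ and Jensen's inequality then give $\mathbb{E}_{p(\vb*\theta|\vb{L})}[G(p(\cdot|\vb*\theta,\vb{x}))]\ge G(p(\cdot|\vb{L},\vb{x}))$ pointwise in $\vb{x}$, and taking the outer expectation over $p(\vb{x})$ yields $Q_S(\vb{L})\ge 0$.

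For $\Delta Q_S(\vb{x}|\vb{L})\ge 0$ I would apply the same idea one level higher. The tower identity
\begin{equation*}
p(\vb*\theta|\vb{L}) \;=\; \mathbb{E}_{p(y|\vb{L},\vb{x})}\bigl[p(\vb*\theta|\vb{L},(\vb{x},y))\bigr]
\end{equation*}
is a one-line Bayes calculation using $p(\vb*\theta|\vb{L},\vb{U})=p(\vb*\theta|\vb{L})$ (the fully conditional assumption stated just above \eqref{eq-dQ}); integrating against $p(\cdot|\vb*\theta,\vb{x'})$ delivers $p(\cdot|\vb{L},\vb{x'}) = \mathbb{E}_{p(y|\vb{L},\vb{x})}[p(\cdot|\vb{L},(\vb{x},y),\vb{x'})]$. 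A second application of Jensen's inequality to $G$ inside the $\vb{x'}$-expectation in \eqref{eq-DQS} then closes the argument.

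The only subtle step is the tower identity; everything else is a two-line application of Jensen, so I do not expect any serious obstacle. I would, however, take care to record where strict convexity of $G$ promotes ``$\ge$'' to ``$>$'' -- namely whenever the posterior mixture over $\vb*\theta$ (resp.\ over $y$) is non-degenerate -- since exactly this strict version will be what a later convergence proof for the acquisition rule needs to invoke.
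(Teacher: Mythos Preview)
Your proposal is correct and follows essentially the same route as the paper: both arguments hinge on the tower identity $p(\cdot|\vb{L},\vb{x'}) = \mathbb{E}_{p(y|\vb{L},\vb{x})}[p(\cdot|\vb{L},(\vb{x},y),\vb{x'})]$ (obtained from full conditionality) together with convexity of $G$. The only cosmetic difference is that the paper converts back to the $S$-form and invokes the proper-scoring-rule inequality $S(q,q)\ge S(p,q)$, whereas you stay in the $G$-form and apply Jensen directly---but Section~\ref{ssct-PSR} already shows these two inequalities are the same statement.
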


\begin{proof}
$Q_I(\vb{L})\ge 0$ by definition of entropy.
Now an identity for entropy is that
$\mbox{I}\left( p( A|B,C)\right) \leq \mbox{I}\left(p(A|B)\right)$,
which means given more evidence $C$, the conditional entropy of
$A$ cannot increase.
So the decrease in log volume is never negative.
This means $\Delta Q_I(\vb{x|L})\ge 0$.
\ld{
The result for $Q_{S}(\vb{L})$ follows directly
by the definition of a scoring rule.}
\wb{
The expression $S(q(\cdot), q(\cdot)) - S(p(\cdot), q(\cdot))$ can be matched inside the outer square brackets of 
Eq~\eqref{eq-QSS}, and it is non-negative for a proper scoring rule.
}

For  $\Delta Q_{S}(\vb{x|L})$ we work as follows.
Start with Equation~\ref{eq-DQS}
and reverse back in the scores:
\begin{flalign} \label{eq:proof8}
&\mathbb{E}_{p(y | \vb{L,x} )} \Big[ \mathbb{E}_{ p(\vb{x'}) p(y'|\vb{L},(\vb{x},y),\vb{x'}) } \big[
      S(p(\cdot|\vb{L},(\vb{x},y),\vb{x'}),y') \big] \Big] 
      \nonumber \\ 
      &\quad- \mathbb{E}_{ p(\vb{x'}) p(y'|\vb{L,x'}) } \big[ S(p(\cdot|\vb{L,x'}),y') \big]      \nonumber  \\
&=\mathbb{E}_{p(y | \vb{L,x} )} \Big[ \mathbb{E}_{ p(\vb{x'})  p(y'|\vb{L},(\vb{x},y),\vb{x'}) } \big[S(p(\cdot|\vb{L},(\vb{x},y),\vb{x'}),y') \nonumber \\ 
&\quad- S(p(\cdot|\vb{L},\vb{x'}),y')  \big] \Big]      \nonumber 
\end{flalign} 
where the second line is done by changing $\mathbb{E}_{ p(y'|\vb{L,x'}) } [\cdot]$ to
 $\mathbb{E}_{ p(y'|\vb{L,x,x'}) } [\cdot]$ 
 (the model is fully conditional),
 then to $\mathbb{E}_{p(y,y' | \vb{L,x,x'} ) }[\cdot]$ and rearranging.
 The second line is $\geq 0$ due to the maximum properties of scoring functions used earlier. 
 Moreover, they guarantee learning will converge to the ``truth" as follow.
\end{proof}
 
\begin{theorem}[Convergence of active learning]\label{thm-cnv}
We have a fully conditional classification model $p(y|\vb*{\theta},\vb{x})$, for $\vb*{\theta}\in \vb{\Theta}$ 
with finite discrete classes $y$ and input features $\vb{x}$.
Moreover, there is a unique ``true" model parameter $\vb*{\theta}_r$ with which the data is generated, 
the prior distribution $p(\vb*{\theta})$ satisfies $p(\vb*{\theta}_r)>0$,
and the model is identifiable.
After being applied for $n$ steps,
the AL algorithm with the acquisition functions defined above (i.e.,
$\Delta Q_{I}(\vb{x|L})$ or $\Delta Q_{S}(\vb{x|L})$) gives labeled data $\vb{L}_n$,
then $\lim_{n \rightarrow \infty} \Delta  Q_I(\vb{x|L}_n) = 0$
and likewise for $Q_{S}(\cdot)$.
Moreover, $\lim_{n \rightarrow \infty}p(\vb*{\theta}|\vb{L}_n)$
is a delta function at $\vb*{\theta}=\vb*{\theta}_r$ for data acquired by
both $\Delta Q_{I}(\vb{x|L})$ or $\Delta Q_{S}(\vb{x|L})$.
\end{theorem}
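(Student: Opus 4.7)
The plan is to attack the theorem in two stages: first drive the acquisition gap $\Delta Q(\vb{x}|\vb{L}_n)$ to zero along the random label sequence, then extract posterior collapse from the vanishing gap using strict convexity and identifiability. For the first stage, I would treat $Q(\vb{L}_n)$ as a stochastic process along the random labels. Using Eq.~\eqref{eq-dQ} and the tower property,
\[
\mathbb{E}\bigl[Q(\vb{L}_{n+1})\bigm|\vb{L}_n\bigr] = Q(\vb{L}_n) - \Delta Q(\vb{x}_n|\vb{L}_n),
\]
where $\vb{x}_n$ is the query chosen by the acquisition policy. Lemma~\ref{thm-nn} gives $\Delta Q(\vb{x}_n|\vb{L}_n)\ge 0$ and $Q\ge 0$, so $Q(\vb{L}_n)$ is a non-negative supermartingale. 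Doob's convergence theorem produces an almost-sure limit, and telescoping yields $\sum_{n=0}^\infty \mathbb{E}[\Delta Q(\vb{x}_n|\vb{L}_n)] \le Q(\vb{L}_0)<\infty$, hence $\Delta Q(\vb{x}_n|\vb{L}_n)\to 0$. Because the policy picks $\vb{x}_n$ as an argmax, this upgrades to $\sup_{\vb{x}}\Delta Q(\vb{x}|\vb{L}_n)\to 0$, covering both $\Delta Q_I$ and $\Delta Q_S$ simultaneously.

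For the posterior-collapse stage, I would rewrite Eq.~\eqref{eq-QS} as a Jensen gap,
\[
Q_S(\vb{L}) = \mathbb{E}_{p(\vb{x})}\Bigl[\mathbb{E}_{p(\vb*{\theta}|\vb{L})}[G(p(\cdot|\vb*{\theta},\vb{x}))] - G\bigl(\mathbb{E}_{p(\vb*{\theta}|\vb{L})}[p(\cdot|\vb*{\theta},\vb{x})]\bigr)\Bigr],
\]
using that $p(\cdot|\vb{L},\vb{x})$ is precisely the posterior-averaged predictive. Combining Stage~1 with the supermartingale convergence of $Q_S(\vb{L}_n)$ gives $Q_S(\vb{L}_n)\to 0$; strict convexity of $G$ then forces $p(\cdot|\vb*{\theta},\vb{x})$ to be essentially constant in $\vb*{\theta}$ under the posterior for $p(\vb{x})$-almost every $\vb{x}$. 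Identifiability of the model family collapses the posterior support to a single $\vb*{\theta}^\star$. Since $p(\vb*{\theta}_r)>0$ and labels are generated from $\vb*{\theta}_r$, standard posterior consistency (Doob's theorem) forces $\vb*{\theta}^\star = \vb*{\theta}_r$. The BALD route is analogous: $Q_I$ is the mutual information $I(\vb*{\theta};y\mid\vb{x})$ under the posterior, vanishing mutual information gives $y\perp\vb*{\theta}\mid\vb{x}$ almost surely, and identifiability again eliminates any residual mass off $\vb*{\theta}_r$.

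The main obstacle I expect is upgrading \emph{``$\sup_{\vb{x}}\Delta Q(\vb{x}|\vb{L}_n)\to 0$''} to \emph{``$Q(\vb{L}_n)\to 0$''}, since monotone convergence on its own only delivers some limit $Q_\infty\ge 0$. I plan to close this by contradiction. If $Q_\infty>0$, the Jensen-gap representation guarantees a set of $\vb{x}$ of positive $p(\vb{x})$-mass on which the posterior predictive is genuinely spread across $\vb*{\theta}$-values; for any such $\vb{x}$, a direct expansion of $\Delta Q_S(\vb{x}|\vb{L}_n)$ via Eq.~\eqref{eq-DQS} lower-bounds it by a Bregman divergence between the posterior-averaged and posterior-conditioned predictives, which stays bounded away from zero along a subsequence and contradicts Stage~1. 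The BALD analogue uses the chain rule $I(\vb*{\theta};y,y'\mid\vb{x},\vb{x}') = I(\vb*{\theta};y\mid\vb{x}) + I(\vb*{\theta};y'\mid\vb{x},\vb{x}',y)$; this is the most delicate piece because the individual information terms are only summable rather than uniformly bounded below, so the contradiction has to be produced along a carefully chosen subsequence of acquisitions at the ``bad'' $\vb{x}$.
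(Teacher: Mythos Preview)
Your plan is sound in outline and overlaps with the paper in Stage~1, but Stage~2 takes a genuinely different---and longer---route.

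For Stage~1, the paper does not write out a supermartingale argument; it simply cites Lemma~5 and Theorem~1 of \cite{ZhaoICLR21} (using finiteness of the $\vb{x}$-domain to pass from ``$\vb{x}$ queried infinitely often'' to ``all $\vb{x}$''). Your Doob-supermartingale version is cleaner and self-contained; just be explicit that the martingale is under the Bayesian joint law with $\vb*{\theta}_r$ marginalised out, so the tower property $\mathbb{E}[Q(\vb{L}_{n+1})\mid\vb{L}_n]=Q(\vb{L}_n)-\Delta Q(\vb{x}_n|\vb{L}_n)$ is legitimate, and that transfer to the fixed $\vb*{\theta}_r$ then uses $p(\vb*{\theta}_r)>0$ as you note.

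For Stage~2, the paper never tries to show $Q_S(\vb{L}_n)\to 0$ and so never meets your ``main obstacle.'' Instead it works directly from $\Delta Q_S(\vb{x}|\vb{L}_n)\to 0$: since the integrand in Eq.~\eqref{eq-DQS} is pointwise non-negative (Lemma~\ref{thm-nn}) and $\vb{x'}$ ranges over a finite set, one gets $p(y'|\vb{L}_n,(\vb{x},y),\vb{x'})\to p(y'|\vb{L}_n,\vb{x'})$ for every $\vb{x},\vb{x'},y,y'$ by strict convexity of $G$. The key step you are missing is then the one-line Bayes identity
\[
\mathbb{E}_{p(\vb*{\theta}|\vb{L}_n)}\bigl[p(y|\vb*{\theta},\vb{x})^2\bigr]
= p(y|\vb{L}_n,\vb{x})\,p\bigl(y\,\big|\,\vb{L}_n,(\vb{x},y),\vb{x}\bigr),
\]
obtained by specialising $\vb{x'}=\vb{x}$, $y'=y$. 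The right-hand side converges to $p(y|\vb{L}_n,\vb{x})^2$, so $\mathrm{Var}_{p(\vb*{\theta}|\vb{L}_n)}[p(y|\vb*{\theta},\vb{x})]\to 0$ for all $\vb{x},y$, and identifiability then collapses the posterior to $\vb*{\theta}_r$. Your contradiction argument can be completed, but the Bregman lower bound you would need at the ``bad'' $\vb{x}$ essentially rediscovers this same second-moment identity; adopting the paper's variance trick eliminates the detour through $Q_\infty$ entirely. For BALD the two approaches coincide: the paper also rewrites $\Delta Q_I$ as $\mathbb{E}_{p(\vb*{\theta}|\vb{L}_n)}[\mathrm{KL}(p(y|\vb*{\theta},\vb{x})\,\|\,p(y|\vb{L}_n,\vb{x}))]$ and uses vanishing KL plus identifiability, exactly as you propose.
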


\begin{proof}
The proof for Lemma~5 in~\cite{ZhaoICLR21} can be readily adapted to show for $\vb{x}$ occurring infinitely often in 
$\vb{L}_n$ and $n\rightarrow \infty$
 $\Delta Q_{I}(\vb{x|L}_n)$ and  $\Delta Q_{S}(\vb{x|L}_n)$ both approach zero as $n\rightarrow \infty$
 since $Q(\vb{L}\cup\{(\vb{x},y)\}) \rightarrow Q(\vb{L})$ when $p(\vb*{\theta}|\vb{L}\cup\{(\vb{x},y)\})\rightarrow p(\vb*{\theta}|\vb{L})$.  Then one adapts the proof of Theorem~1 in \cite{ZhaoICLR21}, which requires finiteness and discreteness of $\vb{x}$,
 to show that $\Delta Q_{I}(\vb{x|L}_n)$ and  $\Delta Q_{S}(\vb{x|L}_n)$ both approach zero as $n\rightarrow \infty$ for all $\vb{x}$.

Now consider $\Delta Q_{I}(\vb{x|L}_n)$ which by properties of the KL function is equal to 
$\mathbb{E}_{ p(\vb*{\theta} | \vb{L}_n) }[ \mbox{KL}( p(y | \vb*{\theta},\vb{x}) || p(y | \vb{L}_n,\vb{x} ) ) ]$.
Let 
\begin{equation}
  \Theta_{NZ} = \left\{ \vb*{\theta} \,:\, \left( \lim_{n\rightarrow \infty} p(\vb*{\theta}|\vb{L}_n) \right)
   >0 \right\}
\end{equation}
Now $\vb*{\theta}_r\in \vb{\Theta}_{NZ}$ due to the arguments of Theorem~1 in \cite{ZhaoICLR21}.
From the KL approaching zero, it follows that for all $x$ and as $n\rightarrow \infty$, $ p(y|\vb*{\theta},\vb{x})$ approaches
$ p(y|\vb{L}_n,\vb{x})$ for all $\vb*{\theta}\in \vb{\Theta}_{NZ}$.
This means that all $\vb*{\theta}\in \vb{\Theta}_{NZ}$
yield identical $p(y|\vb*{\theta},\vb{x})$.
Since the model is identifiable, $\vb{\Theta}_{NZ}$ only has one element, $\vb*{\theta}_r$.

Suppose $\Delta Q_{S}(\vb{x|L}_n)\rightarrow 0$ as $n\rightarrow \infty$. 
Considering the final equation from the proof of Lemma~\ref{thm-nn}, since
 $ \mathbb{E}_{ p(y'|\vb{L}_n,(\vb{x},y),\vb{x'}) } [S(p(\cdot|\vb{L}_n,(\vb{x},y),\vb{x'}),y')- S(p(\cdot|\vb{L}_n,\vb{x'}),y') ]\geq 0$ from properties of the proper scoring rule,
 it follows that  $ \mathbb{E}_{ p(y|\vb{L}_n,\vb{x})p(y'|\vb{L}_n,(\vb{x},y),\vb{x'}) } [S(p(\cdot|\vb{L}_n,(\vb{x},y),\vb{x'}),y')- S(p(\cdot|\vb{L}_n,\vb{x'}),y') ]$ approaches 0  for all $\vb{x}$, $\vb{x'}$.
 Substitute the result from Savage about strictly proper scoring rules into the above simplification:
\begin{flalign} \label{eq:proof8}
&\mathbb{E}_{ p(y|\vb{L}_n,\vb{x})p(y'|\vb{L}_n,(\vb{x},y),\vb{x'}) } \big[S(p(\cdot|\vb{L}_n,(\vb{x},y),\vb{x'}),y')  \nonumber \\  &\quad- S(p(\cdot|\vb{L}_n,\vb{x'}),y') \big]\\
&=\mathbb{E}_{ p(y|\vb{L}_n,\vb{x})} \big[G(p(\cdot|\vb{L}_n,(\vb{x},y),\vb{x'})) \big] - G(p(\cdot|\vb{L}_n,\vb{x'})) ~, \nonumber
\end{flalign} 
which by above must mean $\rightarrow 0$ as $n \rightarrow \infty$. Because 
\begin{equation*}
\begin{aligned}
 \mathbb{E}_{ p(y|\vb{L}_n,\vb{x})} \big[p(y'|\vb{L}_n,(\vb{x},y),\vb{x'}) \big] 
 =  p(y'|\vb{L}_n,\vb{x'})   
 \end{aligned}
\end{equation*}
 and
 $G(\cdot)$ is stricly convex, it must mean that $p(y'|\vb{L}_n,(\vb{x},y),\vb{x'})$ approaches $p(y'|\vb{L}_n,\vb{x'})$   for all $y,y'$ and $\vb{x,x'}$ as $n\rightarrow \infty$.

 Now consider as $n \rightarrow \infty$,
 \wb{
\begin{equation}
\begin{aligned}
&\mathbb{E}_{p(\vb*{\theta}|\vb{L}_n)} \big[p(y|\vb*{\theta},\vb{x})^2\big] \\
&~~~= p(y|\vb{L}_n,\vb{x}) \int_\theta \frac{p(\vb*{\theta}|\vb{L}_n)  p(y|\vb*{\theta},\vb{x})}{ p(y|\vb{L}_n,\vb{x}) } p(y|\vb*{\theta},\vb{x}) \textrm{d} \theta\\
&~~~= p(y|\vb{L}_n,\vb{x})p(y|\vb{L}_n,(\vb{x},y),\vb{x}) \\ 
&~~~\rightarrow p(y|\vb{L}_n,\vb{x})^2
\end{aligned}
\end{equation}
}\noindent
and thus the variance of $p(y|\vb*{\theta},\vb{x})$ w.r.t. $p(\vb*{\theta}|\vb{L}_n)$ approaches
zero for all $\vb{x},y$.
Therefore, 
\begin{math}
 \lim_{n\rightarrow \infty}p(\vb*{\theta}|\vb{L}_n)
\end{math}
must be non-zero on a set of $\vb*{\theta}$
yielding identical 
\begin{math}
 p(y|\vb*{\theta},\vb{x})=\lim_{n\rightarrow \infty} p(y|\vb{L}_n,\vb{x}).
\end{math}
Again, there must be a unique $\theta$ with non-zero limit using identifability.
\end{proof}

Finiteness and discreteness of $\vb{x}$ is used to adapt results from \cite{ZhaoICLR21}  to show for all $\vb{x}$ that $\Delta Q(\vb{x|L}_n)\rightarrow 0$ as $n \rightarrow \infty$,
not an issue since real data is always finite.
Interestingly $\Delta Q_{I}(\vb{x|L})$, i.e., BALD,  achieves convergence too,
which occurs
because the model is identifiable and fully conditional, during AL we are free to choose $\vb{x}$ values that would distinguish different parameter values $\vb*{\theta}$.
Full conditionality also supports BEMPS
because it means any inherent bias in the AL
selection is nullified with the use of the data distribution $p(\vb{x})$.
But it also means that the theory has not been shown to hold for
semi-supervised learning algorithms, where full conditionality does not apply.

Compared with BALD, MOCU and WMOCU, 
the advantage of using strictly proper scoring rules in BEMPS is that,
\wb{as mentioned in Section~\ref{ssct-PSR},}
\ld{
the expected scores can be tailored for different inference tasks.}
\wb{BALD has its quality $Q_I(\cdot)$ as the certainty (negative entropy) of the model's parameter space.
This does not support AL when the uncertainty of some parameters does not strongly influence classification performance. }
This is reflected by its poor performance in our experiments.
MOCU however has convergence issues as ELR, as pointed out by \cite{ZhaoICLR21},
\ld{which were overcome by WMOCU and SMOCU by either augmenting or manipulating the scoring functions.}

\subsection{Scoring functions: CoreMSE and CoreLog} 
\label{sec:scoringfunctions}
Scoring rules promote the quality of predictive distributions by rewarding calibrated predictive distributions.
For example, scoring rules can be developed 
\cite{doi:10.1198/016214506000001437} for some  different inference tasks, including Brier score,
logarithmic score, the beta family, etc. 
Many loss functions used by neural networks, like cross-entropy loss, are indeed strictly proper scoring rules \cite{Lakshminarayanan2017}.
 
It is noteworthy that the acquisition function 
\begin{math}
 \Delta Q_S(\vb{x|L})
\end{math}
defined in Eq~\eqref{eq-DQS} 
is in a general form,
applicable to any strictly proper scoring function for categorical variables.
For instance, using a logarithmic scoring rule,
we have $S_{log}(p(\cdot),y) = \log p(y)$  and $G_{log}(p(\cdot)) = -I(p(\cdot))$.
The corresponding materialization of Eq~\eqref{eq-QSS} is 
\begin{align}  
 &Q_{CoreLog}(\vb{L}) \nonumber \\
 &= \e{ p(\vb{x}) p(\vb*{\theta}|\vb{L}) } {
     \kl{ p(y | \vb*{\theta}, \vb{x})} {p(y| \vb{L, x})}} 
     \label{eq:corelog}
\end{align}
Similarly, using the squared error scoring rule, known as a Brier score,
we have $S_{MSE}(p(\cdot),y)=-\sum_{\hat{y}} \left(p(\hat{y})-1_{y=\hat{y}}\right)^2 $  
and $G_{MSE}(p(\cdot)) = \sum_y p(y)^2 -1$. 
The definition of $Q(\vb{L})$ in Eq~\eqref{eq-QSS} is
\begin{align} 
& Q_{CoreMSE}(\vb{L}) \nonumber \\
& =  \e{ p(\vb{x}) p(\vb*{\theta}|\vb{L}) }{
      \sum_y \big( p(y | \vb*{\theta}, \vb{x}) - p(y| \vb{L, x})\big)^2 } 
      \label{eq:coremse}
\end{align}

\begin{algorithm}[!t]
\small
  \caption{Estimating point-wise $\Delta Q(\vb{x|L})$ with Equation~\eqref{eq-DQS} }\label{alg-qrx}
  \begin{algorithmic}[1]
    \Require unlabeled data point $\vb{x}$, existing labeled data $\vb{L}$, estimation point $\vb{x'}$ 
        \Require model/network ensemble $\vb{\Theta}=\{\vb*{\theta}_1,...,\vb*{\theta}_E\}$ built from labeled data $\vb{L}$,
    \Require strictly convex function $G(\cdot)$ taking as input a probability density over $y$ 
    \State $Q=0$
    \State $qx(\cdot) = \sum_{\vb*{\theta}\in \vb{\Theta}} p(\vb*{\theta}|\vb{L})
              p( \cdot|\vb*{\theta},\vb{x}) $
    \For{$y$}   
        \State $q(\cdot) = \sum_{\vb*{\theta}\in \vb{\Theta}} p(\vb*{\theta}|\vb{L},(\vb{x},y))
              p( \cdot|\vb*{\theta},\vb{x'}) $
        \State $Q ~+\!\!= qx(y)G(q(\cdot))$
    \EndFor
    \State $q(\cdot) = \sum_{\vb*{\theta}\in \vb{\Theta}} p(\vb*{\theta}|\vb{L})
              p( \cdot|\vb*{\theta},\vb{x'}) $    
   \State $Q~ -\!\!= G(q(\cdot))$
    \State \Return $Q$
  \end{algorithmic}
\end{algorithm}
\begin{algorithm}[!t]
  \captionof{algorithm}{Estimate of $\argmax_{\vb{x}\in \vb{U}} \Delta Q(\vb{x|L})$ }\label{alg-qr}
  \begin{algorithmic}[1]
    \Require unlabeled pool $\vb{U}$, estimation pool $\vb{X}$
   \For{$\vb{x} \in \vb{U}$}
    \State $Q_{\vb{x}}=0$
    \For{$\vb{x'} \in \vb{X}$}
         \State $Q_{\vb{x}} ~ +\!\!= \Delta Q(\vb{x|L,x'})$
    \EndFor \EndFor
    \State \Return $\argmax_{\vb{x} \in \vb{U}} Q_{\vb{x}}$ 
  \end{algorithmic}
\end{algorithm}

Viewed above, minimising Brier score gets the probability right in a least squares sense, i.e., minimising
the squared error between the predictive probability and the one-shot label representation,
which pays less attention to very low probability events.
Meanwhile, log probability gets the probability scales right, paying attention to all events. 
In most cases, we can create a particular model to match just about any Bregman divergence (e.g., minimum squared errors is a Gaussian). In practice, we can also  use robust models (e.g., a Dirichlet-multinomial rather than a multinomial, a negative binomial rather than a Poisson, Cauchy rather than Gaussian) in our log probability. 
Combining the two $G(\cdot)$ functions above with Equation~\eqref{eq-DQS} yields two acquisition functions for the different scoring rules.  
\wb{
\begin{eqnarray*}
\lefteqn{\Delta Q_{CoreMSE}(\vb{x}|\vb{L}) }&&\label{eq:QMSE}\\
&=&
\e{ p(\vb{x'}) p(y|\vb{L,x}) }  {\sum_{y'}
\left( p(y'|\vb{L},(y,\vb{x}),\vb{x'}) -
p(y'|\vb{L,x'})
\right)^2}\nonumber\\
\lefteqn{\Delta Q_{CoreLog}(\vb{x}|\vb{L}) }&&\label{eq:QLog}\\
&=&
\e{p(\vb{x'}) p(y|\vb{L,x}) }{
\kl{p(y'|\vb{L},(y,\vb{x}),\vb{x'})}{p(y'|\vb{L,x'})}
}\nonumber
\end{eqnarray*} }

\subsection{Understanding Uncertainty and Diversity}
\label{sec:UncDiv}

\wb{
How do the $\Delta Q$ functions of Eqs~\eqref{eq:QMSE} and~\eqref{eq:QLog} relate to uncertainty,
the traditional class of functions used in early AL?
The $\Delta Q_{CoreMSE}$ measures the average change in model probabilities when the label of $\vb{x}$ is acquired and
$\Delta Q_{CoreLog}$ measures the average KL between model probabilities before and after acquisition.
Thus they measure the average model change induced by 
acquiring the label of $\vb{x}$.
These are a measure of epistemic uncertainty revealed by the
data $\vb{x}$, revealing how uncertain the model is around the
data point, and do not measure aleatoric uncertainty.  Note,  $p(y|\vb{L,x})$ is a proxy for epistemic certainty in the common situation
where the true model admits high accuracy prediction, common in AL experiments.
}

\wb{
What about diversity, the other traditional function used in AL?
Eqs~\eqref{eq:QMSE} and~\eqref{eq:QLog} also hold if $\vb{x}$ is
a batch of data to be labeled.  
Moreover, they give a fundamental theoretical objective to maximise for batch AL,
whereas diversity we argue is an observed feature of good batch AL, that should be derivable from fundamental theory, such as ours.
}
\subsection{Enhanced Batch Diversity for BEMPS}
\label{ssct:algos}


Algorithm~\ref{alg-qr} gives an implementation of BEMPS for 
an arbitrary strictly convex function $G(\cdot)$, returning the data point with the best-estimated measure.
To work with a Bregman divergence or score, 
the corresponding strictly convex function $G(\cdot)$ should first be derived.
When $G(\cdot)$ is negative entropy, we call this CoreLog (Eq~\eqref{eq:corelog}) and
when $G(\cdot)$ is the  sum of squares we call this CoreMSE (Eq~\eqref{eq:coremse},
corresponding to the logarithmic or Brier scoring rules respectively.
Both Algorithms~\ref{alg-qr} and~\ref{alg-qrd} use a fixed {\it estimation pool}, $\vb{X}$, a fixed random subset of the initial unlabeled data
used to estimate expected values $\mathbb{E}_{ p(\vb{x'})}[\cdot]$.
Algorithm~\ref{alg-qr} calls Algorithm~\ref{alg-qrx} which
\ldr{implements Eq~\eqref{eq-DQS} and estimates the expected score change with respect to $\vb{x'} \in \vb{X}$, while assuming the label of $\vb{x}$.}
Note $p(\vb*{\theta}|\vb{L},(\vb{x},y))$ is computed from $p(\vb*{\theta}|\vb{L})$
via ensembles as follows:
\begin{align}
    p(\vb*\theta |\vb{L},(\vb{x},y))
        \approx \frac{
                p(\vb*\theta|\vb{L}) p(y|\vb*\theta, \vb{x})
                }{
                    \sum_{\vb*\theta\in\vb*\Theta}  
                        p(\vb*\theta|\vb{L}) p(y|\vb*\theta,\vb(x))
                }
\end{align}

Algorithm~\ref{alg-qrd} returns $B$ data points representing a batch with enhanced diversity:
it first calls Algorithm~\ref{alg-qrx} to compute, 
for each data point $\vb{x}$ in the unlabeled pool, 
a vector of expected changes in score values over the estimation pool $\vb{X}$.
Thus, this vector conveys information about  uncertainty directly
related to the change in score due to the addition of $\vb{x}$.
While the gradient embedding used in \cite{ash2019deep}
represents a data point's impact on the model, 
our vector represents a data point's direct 
impact on the mean proper score.
Concurrently Algorithm~\ref{alg-qrd} computes the estimate of $\Delta Q(\vb{x|L})$ for these same $\vb{x}$s.
The top $T$\% of scoring data $\vb{x}$ are then clustered with $k$-Means and a representative
of each cluster closest to the cluster mean is returned.
The intuition is that 1) only higher scoring data $x$ should appear in a batch;
2) those clusters capture the pattern of expected changes in score values
deduced by samples in the unlabeled pool;
3) samples in the same cluster can affect the 
learning similarly, so it should not co-occur in a batch.

\begin{algorithm}[!t] 
  \captionof{algorithm}{Finding a diverse batch}\label{alg-qrd}
  \begin{algorithmic}[1]
    \Require unlabeled pool $\vb{U}$,  batch size $B$
        \Require estimation pool $\vb{X}$,  top fraction $T$
    \State  $\forall_{\vb{x} \in \vb{U}} Q_{\vb{x}} = 0$
    \For{$\vb{x} \in \vb{U}$, $\vb{x'} \in \vb{X}$}
         \State $Q_{\vb{x}}~ +\!\!= vec_{\vb{x,x'}} = \Delta Q(\vb{x|L,x'})$
    \EndFor
    \State $V \leftarrow topk(Q,T*|\vb{U}|)$
    \State $batch = \emptyset$
    \State $centroids$ = $k$-Means centers $(vec_{\vb{x}\in V}, B)$
    \For{$c \in centroids$}
         \State $batch ~\cup\!\!= \{ \argmin_{\vb{x} \in V}||c - vec_{\vb{x}}|| \}$
    \EndFor
     \State \Return $batch$
  \end{algorithmic}
\end{algorithm}
\begin{algorithm}[!t] 
\small
  \caption{Bayesian Estimate of Mean Proper Scores with Ensembles}\label{alg-ensemble}
  \begin{algorithmic}[1] 
    \Require initial unlabeled data $\vb{U}$, initial labeled data $\vb{L}$, 
    \Require batch size $B$, estimation pool $\vb{X}$, the number of acquisition iteration $N$
    
    \State Initialize: $n=0,\vb{L}_0\leftarrow \vb{L}, \vb{U}_0\leftarrow \vb{U}$
    \While {$n<N$} 
        \If{Ensemble method is Deep Ensembles}
        \ldr{
        \While {$e<E$} 
                \State Randomly split $\vb{L}_i$ into training and validation
                \State  Train a model $\vb*{\theta}_{i,e}$ 
        \EndWhile
        \EndIf
        \If{Ensemble method is MC-Dropout}
            \State Randomly split $\vb{L}_n$ into training and validation
            \State Train a single model $\vb*{\theta}_n$ on the split
            \While {$e<E$} 
                \State Compute $\vb*{\theta}_{n,e}$ via a single forward pass through \\
                \hspace*{4.2em} the current trained model 
                $\vb*{\theta}_n$ with MC-Dropout
            \EndWhile
        \EndIf
        \State $\vb*\Theta_{n} = \{\vb*{\theta}_{n,1}, \vb*{\theta}_{n,2},..., \vb*{\theta}_{n,E}\}$
        \State $batch = \emptyset $
        \If{$B$ is one}
        \State $batch = \argmax_{\vb{x}\in \vb{U}} \Delta Q(\vb{x|L})$ implemented by\newline
        \hspace*{2.7em} Algorithm~\ref{alg-qr} 
        \Else{ 
        \State $batch = B$ samples acquired with 
        Algorithm~\ref{alg-qrd}}
         \State $\vb{L}_{n+1}\leftarrow \vb{L}_n \cup batch$
        \State $\vb{U}_{n+1}\leftarrow \vb{U}_n \setminus batch$
        \EndIf}
    \EndWhile
  \end{algorithmic}
\end{algorithm}


\subsection{Ensembles for BEMPS}
\label{sec:emsemlesBEMPS}

\ldr{
Computing $p(\cdot | \vb{L}, x)$ and $p(\cdot | \vb{L}, (\vb{x}, y), \vb{x}')$ 
is intractable.
We approximate the integral over $\vb*\theta$ via }
two ensemble methods in Algorithm~\ref{alg-ensemble}:  Deep Ensembles and MC-Dropout. 
We first implement Deep Ensembles since the method has been successfully used to improve predictive performance \cite{Lakshminarayanan2017},
\ldr{
MC-Dropout, trying to
peed up the acquisition process}

\textbf{Deep Ensembles.}
\ldr{
Let $\vb{\Theta}$ denote a set of DNN models 
(i.e., DistilBERT \cite{sanh2019distilbert} 
for text, and VGG-16 \cite{Simonyan2014VeryDC} for images in our experiments)
with size $E$.
Each individual model, denoted as $\vb*\theta_e$,
is trained with a randomly generated training-validation split of
incrementally augmented labeled pool $\vb{L}$
at each acquisition iteration.
The training-validation ratio is set to 70/30.
In other words,
different models are trained utilizing different
train/validation splits, 
which fosters ensemble diversification.}
We call this split process dynamic validation
set (aka Dynamic VS). Once the ensemble models are trained, a batch of samples can be acquired via Algorithm~\ref{alg-qrd}.
This dynamic approach to validation was found to increase the training efficiency and model’s performance,
as demonstrated in our ablation studies.


\textbf{MC-Dropout.}
One drawback of using Deep Ensembles with DNNs 
is its high computational cost in terms of
\ld{training the active learner, i.e., the classifier.}
\cite{gal2016dropout} demonstrated that Monte Carlo samples of the posterior can be obtained by conducting several stochastic forward passes at test time. 
\ld{Thus, an ensemble of $E$ models often used in standard deep ensemble methods can be replaced
 with MC-Dropout models in the AL process to approximate the integral over $\vb*\theta$  \cite{pop2018deep}.}
In other words,
the predictive distribution $p(\cdot|\vb{L,x})$ and $p(\cdot|\vb{L},(\vb{x},y),\vb{x'})$ could be estimated through $E$ forward passes on a singled trained model, 
\ld{
rather than training the classifier $E$ times, and thus saving computational cost, particularly when the value of $E$ becomes large. Refer to Table~\ref{tab:table5ensemblestime}.}

\section{Experiments}
\label{sec:Experiments}

We conducted comprehensive sets of experiments on various 
classification tasks to demonstrate
the efficacy of BEMPS by comparing CoreMSE and CoreLog
to some recent AL methods. 
To show the reliability and robustness of our BEMPS framework,
we consider
four benchmark text datasets and two benchmark image datasets
for either binary classification or multi-class classification tasks.
Moreover, 
going beyond the quantitative studies, 
we further visualise different AL methods using Data Maps \cite{swayamdipta2020dataset} 
and t-SNE to 
have an in-depth understanding of their sampling behaviors along with model calibration.

\begin{table}[!t]
  \caption{Four benchmark text datasets and the used language model}
  \label{tab:table1}
  \centering\small
  \begin{tabular}{llll}
    \toprule
    Dataset & \makecell{Unlabeled/\\Test sizes} & \makecell{Lang.\\~Model} & \makecell{Initial \\ labeled size}  \\
    \midrule
    AG NEWS  & 120,000 / 7,600 & DistilBERT  &   26  \\
    PUBMED & 15,000 / 2,500  & DistilBERT & 26    \\
    IMDB     &  25,000 / 25,000  & DistilBERT       & 26 \\
    SST5     & 8544 / 2210 & DistilBERT       & 26   \\
    \bottomrule
  \end{tabular}

\nodialogue{
  \caption{Educational dialogue act datasets and the used language model}
  \label{tab:table2}
  \centering\small
  \begin{tabular}{llll}
    \toprule
      Dataset & \makecell{Unlabeled/\\Test sizes} & \makecell{Lang.\\~Model} & \makecell{Initial \\ labeled size}  \\
    \midrule
    First-level    & 3763 / 476 & DistilBERT       & 50   \\
    Second-level     & 3763 / 476 & DistilBERT       & 50   \\
    \bottomrule
  \end{tabular}
  }
\vspace{5pt}

  \caption{Image datasets and the used models}
  \label{tab:table3}
  \centering\small
  \begin{tabular}{llll}
    \toprule
    Dataset & \makecell{Unlabeled/\\Test sizes} & \makecell{NN\\~Model} & \makecell{Initial \\ labeled size}  \\
    
    \midrule
    MNIST  & 60,000 / 10,000 & BNN  &   20  \\
    CIFAR10 & 50,000 / 10,000  & VGG-16bn & 20    \\
    \bottomrule
  \end{tabular}
\end{table}

\subsection{Text classification}

\begin{figure*}[!t]
\centering
    \includegraphics[width=0.8\textwidth]{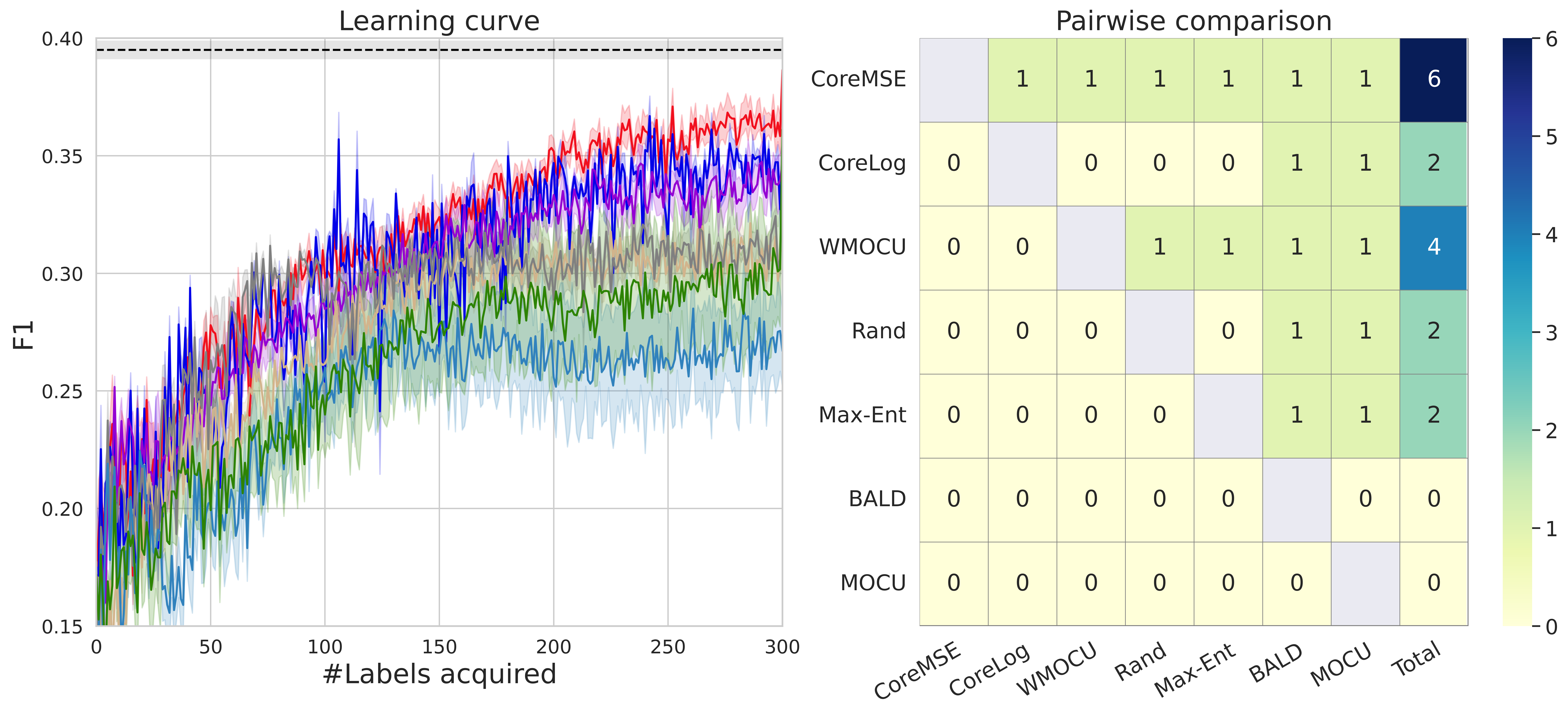}
    \includegraphics[width=0.9\textwidth]{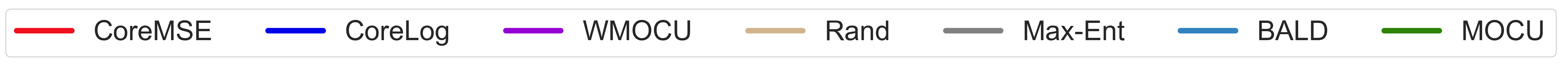}
     \caption{Performance on SST5 dataset. The left half illustrates the learning curve, while the right half illustrates the matrix of paired comparisons. 
     The dashline represents the performance of the backbone classifier trained on the entire dataset.
    }
    \label{fig:uncertaintybasedAL_performance}
\end{figure*}



\subsubsection{Datasets} We use {\it four benchmark
datasets}: IMBD, AG NEWS, PUBMED and SST5 for text classification as shown in \autoref{tab:table1}.
\wtr{IMDB contains 50K movie reviews, categorized as either Positive or Negative \cite{maas2011learning}.
AG NEWS includes 120K texts distributed across four classes: Science/Technology, World, Business, and Sports \cite{zhang2015character}. \cite{zhang2015character}. PUBMED 20k is designed for sentence classification and contains approximately 20K medical abstracts divided into five classes: Objective, Background, Conclusions, Results, and Methods \cite{dernoncourt2017pubmed}. 
Similarly, SST5 includes 11K sentences from movie reviews, labeled with five sentiment classes \cite{socher2013recursive}.}

\subsubsection{Baselines} \label{sssec:sectxbaselines} 
\ldr{Let $B$ indicate the batch side.}
We consider two types of performance comparisons: non-batch active learning \ldr{($B = 1$)} and batch active learning \ldr{($B > 1$)}. 
\wtr{In the case of non-batch active learning, CoreMSE and CoreLog are compared against Max-Entropy \cite{yang2016active}, BALD \cite{Houlsby2011}, MOCU \cite{ZhaoICLR21}, and WMOCU \cite{ZhaoICLR21}, along with a random baseline. For batch active learning, CoreMSE and CoreLog are compared to WMOCU, BADGE \cite{ash2019deep}, and ALPS \cite{yuan-etal-2020-cold}, also alongside a random baseline.}

\textbf{CoreMSE \& CoreLog.} In the default configuration of BEMPS we used deep ensembles. 
For non-batch active learning, we chose a sample with the highest uncertainty score related to classification error. 
For batch active learning, each sample in the unlabeled pool is represented as a vector of scores computed by \ldr{Eq~\eqref{eq-DQS}}. 
We then used $k$-MEANS to generate $B$ clusters and select from each cluster the sample closest to the cluster center to form the batch.

\textbf{Random.} We sampled $B$ samples uniformly from the unlabeled pool.

\textbf{Max-Ent.} We chose a sample with the highest entropy of the predictive distribution \cite{lewis1994sequential, wang2014new}. 

\textbf{BALD.} Like Max-Entropy, we chose a sample with the maximum mutual information based on how well labeling those samples would improve the model parameters \cite{Houlsby2011}. 

\textbf{MOCU \& WMOCU.} Following \cite{ZhaoICLR21}, for non-batch active learning, we selected a sample with the highest uncertainty score related to classification error. Similarly to CoreMSE and CoreLog, we represent each sample in the unlabeled pool as a vector of scores, as computed by Eqs~(5)and (10) in \cite{ZhaoICLR21}. 
For batch active learning, we selected $B$ samples using the k-MEANS clustering approach.

\textbf{BADGE.} Following \cite{ash2019deep}, we represented each sample with a gradient embedding generated from a pretrained language model, specifically DistilBERT in our experiments. Then, using $k$-MEANS++, we generate $B$ clusters and choose a representative from each cluster that is closest to the mean.

\textbf{ALPS.} Different from BADGE, we followed \cite{ash2019deep} to generate surprisal  embeddings from DistilBERT as
inputs to $k$-MEANS. 
Then, a similar approach was used to choose the $B$ samples in a batch.

\subsubsection{Model configuration}
We used a small and fast pretrained language model,
DistilBERT \cite{sanh2019distilbert} as the backbone classifier 
in our experiments.
We fine-tuned DistilBERT on each dataset after each AL iteration with a random re-initialization \cite{frankle2018lottery},
 proven to improve the model performance over the use of incremental fine-tuning with the newly acquired samples \cite{gal2017deep}.
The maximum sequence length was set to 128, 
and a maximum of 30 epochs was used in fine-tuning DistilBERT with early stopping \cite{dodge2020fine}.
We used AdamW \cite{loshchilov2017decoupled} as the optimizer with a learning rate 2e-5 and betas 0.9/0.999. All experiments were run on 8 Tesla 16GB V100 GPUs.

Each AL method was run for 5 times with different random seeds on each
dataset. The batch size $b$ was set to 1 for the non-batch active learning,
and for batch active learning, the batch size $B$ 
is set to \{5, 10, 50, 100\}.
We trained five DistilBERTs with Dynamic VS \ldr{as members of a deep ensemble,}
as described in Algorithm~\ref{alg-ensemble}.
Note that our implementation of CoreMSE and CoreLog does not rely on a separate large validation set.

\begin{figure*}[!t]
\centering
    \includegraphics[width=0.8\textwidth]{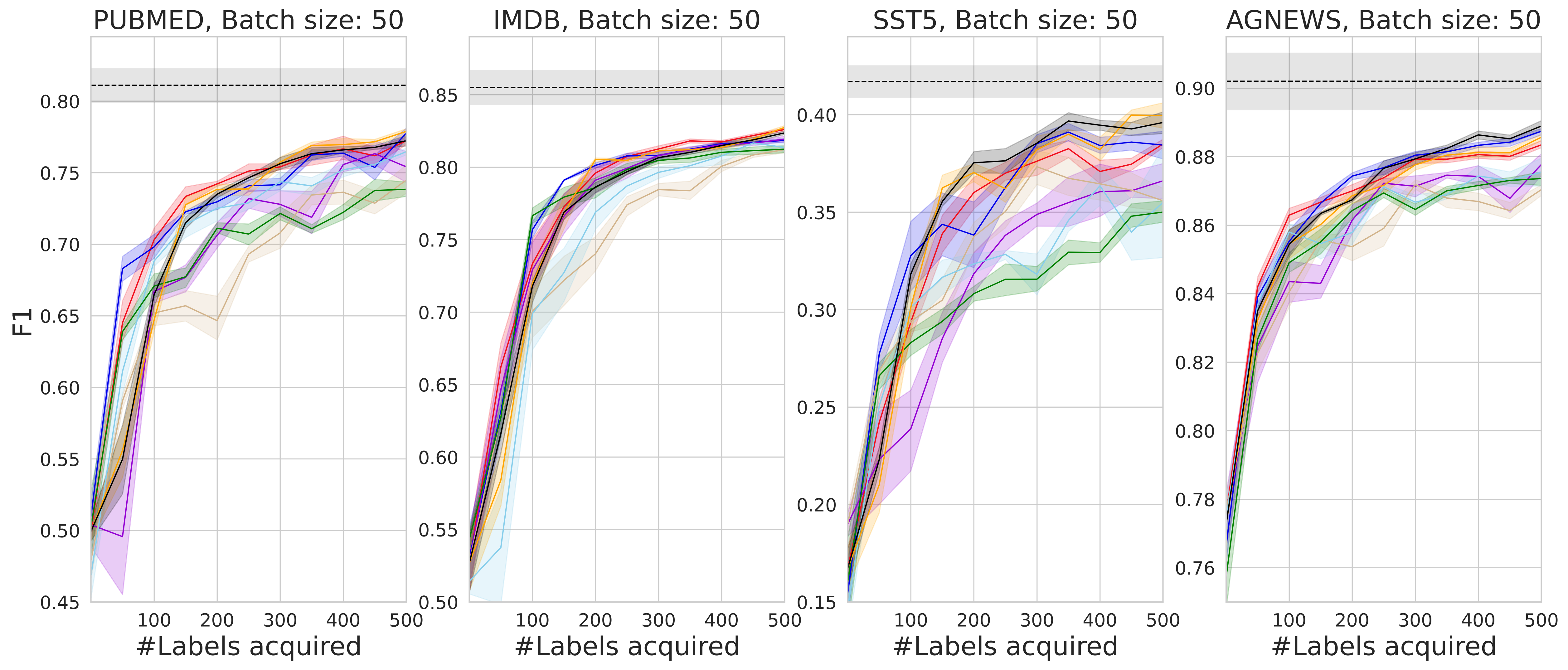}\\
    \includegraphics[width=0.9\textwidth]{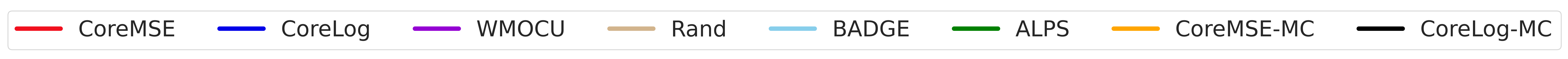}
    \caption{Learning curves of batch size 50 for PUBMED, IMDB, SST5 and AG NEWS. The dashline represents the performance of the backbone classifier trained on the entire dataset.}
    \label{fig:b50_f1_lc}
\end{figure*}
\begin{figure*}[!t]
     \centering
    \begin{subfigure}[t]{0.45\textwidth}
        \centering
        \includegraphics[width=\textwidth]{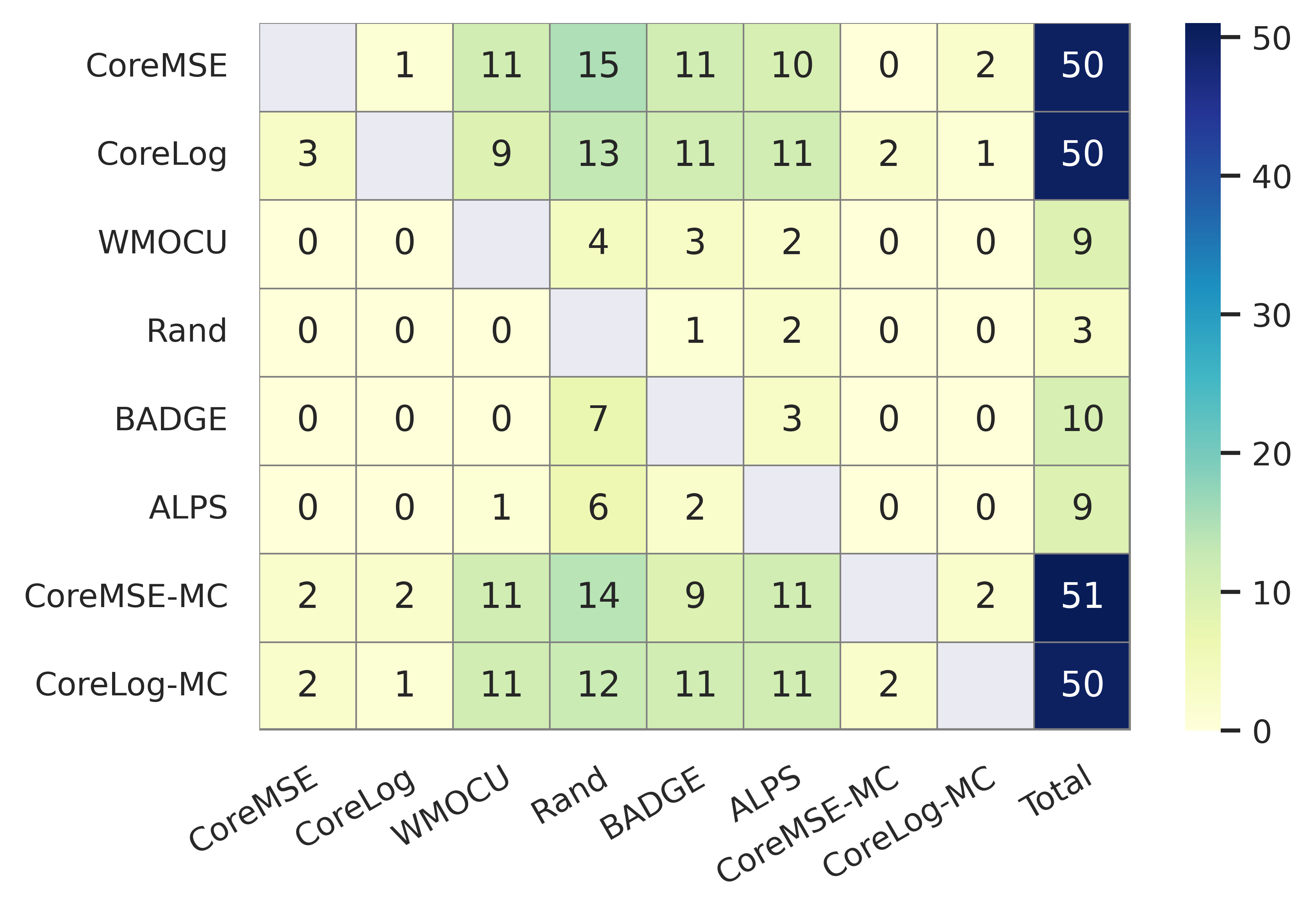}
        \caption{F1-based pairwise comparison}
    \end{subfigure}%
    ~ 
    \begin{subfigure}[t]{0.45\textwidth}
        \centering
        \includegraphics[width=\textwidth]{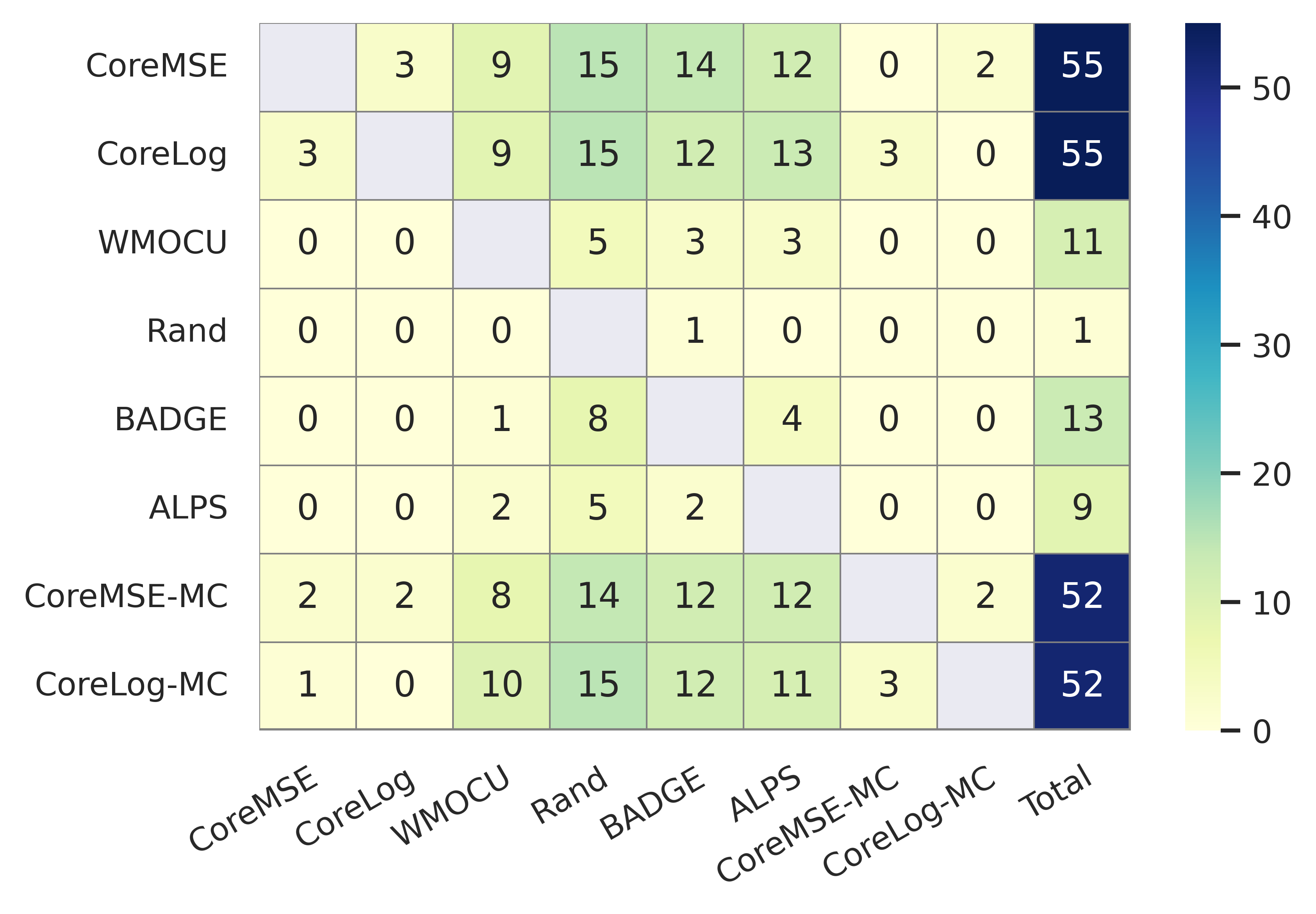}
        \caption{Accuracy-based pairwise comparison}
    \end{subfigure}
    \caption{Pairwise comparison matrices of batch active learning strategies on four datasets with four batch sizes. }
    \label{fig:divercity_sum_pc}
\end{figure*}

\subsubsection{Comparative performance metrics}   \label{sssec: modelconfiguration}
We followed \cite{ash2019deep} to compute a pairwise comparison matrix
but used a counting-based algorithm \cite{shah2017simple},
as shown in the right of Fig.~\ref{fig:uncertaintybasedAL_performance}.
The rows and columns of the matrix 
correspond to the AL methods used in our experiments. 
Each cell represents the outcome of the comparison between method $i$ and method $j$ over all datasets (D). 
Let $C_{i,j,d} = 1$ when method $i$ beats method $j$ on dataset $d$, 
and 0 otherwise.
The value of each cell is computed as
$C_{i,j}=\sum_d^DC_{i,j, d}$.
To determine the value of each $C_{i,j,d}$ entry, 
we used a two-side paired $t$-test to compare their performance for five weighted 
F1 scores (or accuracy) at maximally spaced labeled sample sizes $\{l_{i,j,d}^1, l_{i,j,d}^2, ...,l_{i,j,d}^5\}$ 
from the learning curve. 
We computed the $t$-score as $t = \sqrt{5}\hat{\mu}/\hat{\sigma}$, where
$\hat{\mu}$ and $\hat{\sigma}$ are the usual sample mean and standard deviation.
For example, in Fig.~\ref{fig:uncertaintybasedAL_performance}, five samples according to a step size 50 were used: the first sample $l_{i,j,d}^1$ was chosen after $50$ labels, the second $l_{i,j,d}^1$ after $100$, etc.
Whereas, for Fig.~\ref{fig:b50_f1_lc}, a step size 100 was used.
$\hat{\mu}=\frac{1}{5}\sum_{k=1}^5\left(l_{i,j,d}^k\right), \hat{\sigma} = \sqrt{\frac{1}{4}\sum_{k=1}^5\left(l_{i,j,d}^k - \hat{\mu}^2\right)}$. 
The value $C_{i,j,d}$ is set to 1 if method $i$ outperforms method $j$ with a $t$-score greater than 2.776 (corresponding to a $p$-value less than 0.05).
We summed the outcomes of each pairwise comparison to determine the total performance score for each method, as displayed in the 'Total' column of the matrix.
The method with the highest total score is ranked as the best among the AL methods evaluated.

\subsubsection{Results and Analysis}

\textbf{Non-batch active learning.} 
We first compared our CoreMSE and CoreLog based on Algorithm~\ref{alg-qr} to the baselines on the SST5 datasets
to demonstrate how those methods perform particularly in a hard classification setting
where classes are imbalanced.
The learning curve sitting on the left of Fig.~\ref{fig:uncertaintybasedAL_performance} shows
CoreMSE, CoreLog and WMOCU outperform all the other methods considered,  
we can attribute this to 
their estimation of uncertainty being better related to classification accuracy.
Among these three methods, our CoreMSE performs the best in terms of F1 score.
The matrix at the right of
Fig.~\ref{fig:uncertaintybasedAL_performance} then presents a statistical summary of comparative performance.
CoreMSE has the highest total quantity which further confirms its effectiveness in acquiring informative samples in AL. 


\textbf{Batch active learning.}
We compared batch CoreMSE and CoreLog implemented based on Algorithm~\ref{alg-ensemble} with
BADGE, ALPS 
and batch WMOCU on the four datasets listed
in \autoref{tab:table1},
We extended WMOCU with our Algorithm~\ref{alg-ensemble} to build its batch counterpart.
Specifically, we generated $vec_{\vb{x}}$ using its point-wise error estimates, i.e., 
Eq~(10) in \cite{ZhaoICLR21}. 
The random baseline selects $B$ unlabeled samples randomly.
Here we present the results derived with $B=50$ as an example.
More comprehensive results with different batch sizes, including accuracy,
can be found in Appendix~B.

The learning curves in Fig.~\ref{fig:b50_f1_lc} show that batch CoreMSE and CoreLog almost always outperform the other AL methods as
the number of acquired samples increases. 
Batch WMOCU devised with our batch algorithm compare favourably 
with BADGE and ALPS that use gradient/surprisal embeddings to increase batch diversity.
These results suggest that selecting the representative samples from clusters learned with vectors
of expected changes in scores (i.e., Eq~\eqref{eq-DQS})  is appropriate, leading to an improved AL performance.
Moreover, the performance differences between our methods and others 
on PUBMED and SST-5 indicate that
batch CoreMSE and CoreLog can still achieve good results 
when the annotation budget is limited in those imbalanced datasets.

We also created two pairwise comparison matrices for different batch sizes using either F1 score or accuracy.
Fig.~\ref{fig:divercity_sum_pc} show the sum of the four matrices, summarizing 
the comparative performance on the four datasets.
The maximum cell value is now $4\times 4 = 16$. 
In other words,
if a method beats another on all the four datasets across the four different batch sizes,
the corresponding cell value will be 16.
Both matrices computed with F1 score and accuracy respectively 
show both CoreMSE and CoreLog are ranked higher than the other methods. 
Thus, we attribute the better performance of CoreMSE and CoreLog to the fact that their estimation of uncertainty is more correlated to prediction performance than other methods studied.

\subsection{Ablation Studies}
\label{sec:ablation}

\begin{figure*}[!t]
\centering
    \begin{subfigure}[t]{0.4\textwidth}
        \centering
        \includegraphics[width=1\textwidth]{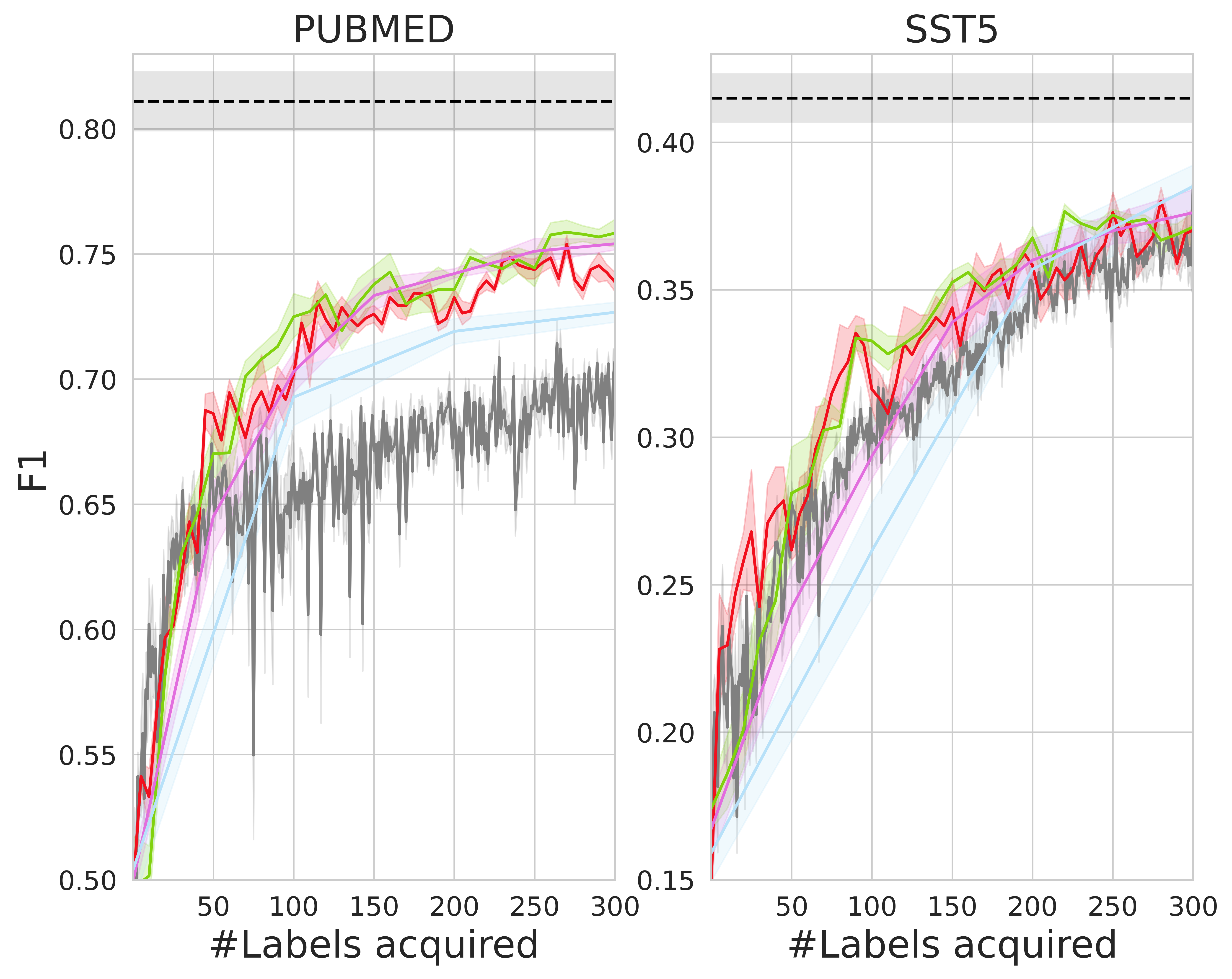}
       \includegraphics[width=1\textwidth]{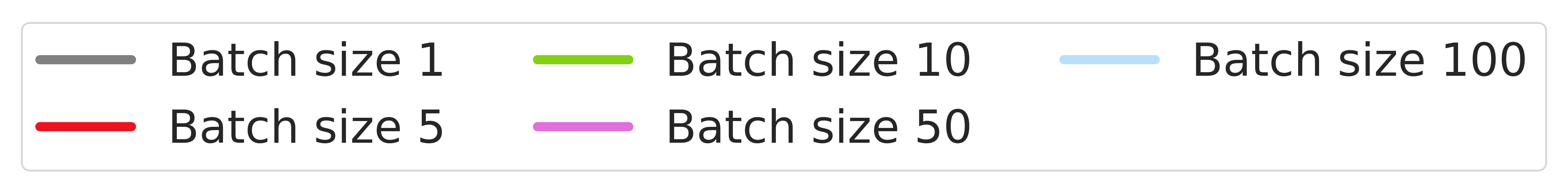}
        \caption{Learning curves of various batch size for CoreMSE.}
          \label{fig:b50_batchsize}
    \end{subfigure}%
    ~ 
    \begin{subfigure}[t]{0.4\textwidth}
        \centering
        \includegraphics[width=1\textwidth]{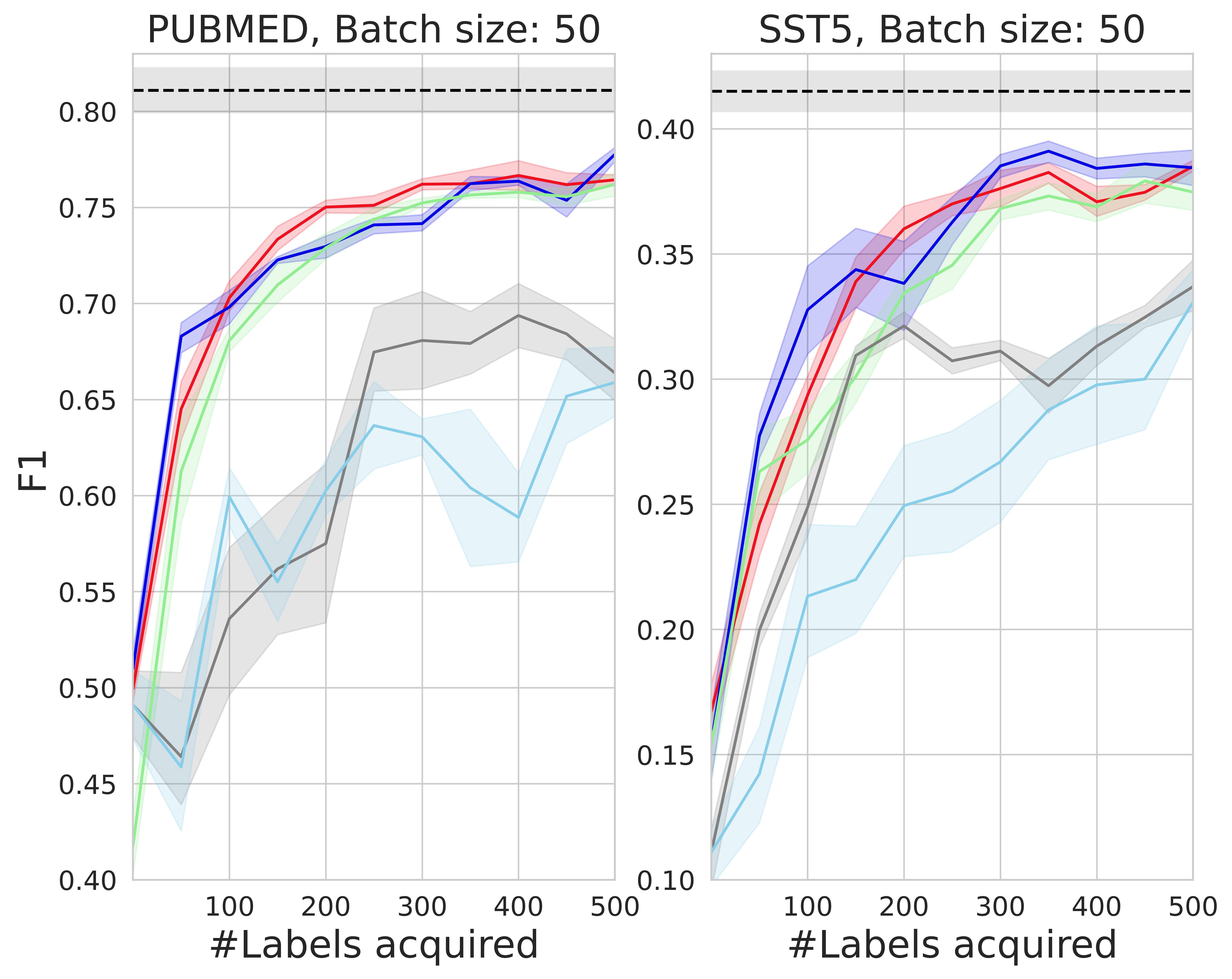}
        \includegraphics[width=1\textwidth]{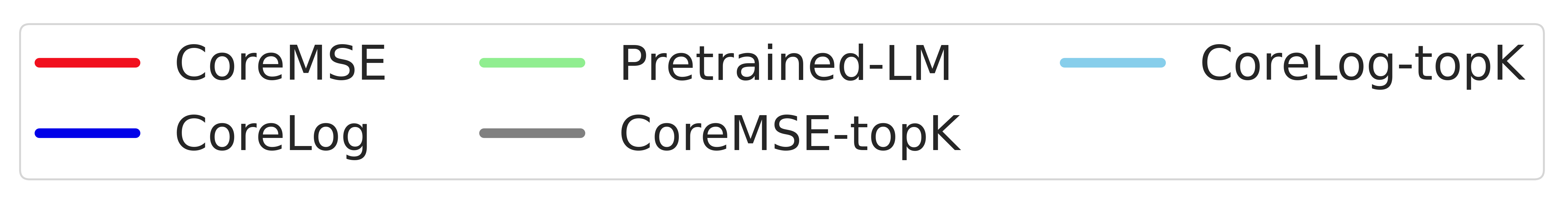}
        \caption{Learning curves of the model training with diversity.}
          \label{fig:b50_diversity}
    \end{subfigure}
    \vspace{-2mm}
    \caption{Learning curves for batch size and batch diversity on PUBMED and SST5. The dashline represents the performance of the backbone classifier trained on the entire dataset.
    }
    \label{fig:b50_batchsize_diversity}
\end{figure*}

\textbf{Batch size.} 
 In  Fig.~\ref{fig:b50_batchsize}, we plotted the learning curves of batch CoreMSE with various batch sizes (i.e., $B \in \{1, 5, 10, 50, 100\}$) for PUBMED and SST5. 
The curves demonstrate that the performance of smaller batch sizes (5 or 10) is better to that of larger batch sizes (50 or 100), 
particularly in the early training iterations. 
Moveover, 
the batch version of CoreMSE and CoreLog perform substantially better than their non-batch counterparts.
In order to acquire $B$ samples, 
the non-batch (i.e., $B=1$) case must perform multiple one-step-look-ahead optimising acquisitions consecutively (Algorithm~\ref{alg-qrx} and Algorithm~\ref{alg-qr}). 
In contrast, the batch case must only perform the acquisition once, 
heuristically, introducing a form of diversity based directly on the error surface (Algorithm~\ref{alg-qrx} and Algorithm~\ref{alg-qrd}).
While the importance of diversity is illustrated in \cite{zhou2020understanding},
they used larger batch sizes in their experiments.
a phenomena also seen with BatchBALD \cite{KAJvAYG2019}.  
Thus \wbr{we hypothesise} that the one-step-look-ahead of
Equation~\eqref{eq-dQ} is only greedy, and not optimal.
\wb{
In other words,
diversity of the batch of data is an advantage for the $\Delta Q$ functions of Equations~\eqref{eq:QMSE} and~\eqref{eq:QLog}, and this occurs mainly for smaller batch sizes where the projections for model probability estimates after acquiring labeled data retain some accuracy.}
Overall, the learning curves show that batch algorithms outperform non-batch algorithms, 
and that smaller batch sizes are preferable to large batch sizes.




\textbf{Batch diversity.}
To further study the effectiveness of Algorithm~\ref{alg-qrd},
we considered the following variants: 
1) Pretrained-LM: 
We used the embedding generated by 
DistilBERT for $k$-Means clustering, 
which is similar to the BERT-KM in \cite{yuan-etal-2020-cold};
and 2) CoreMSE-topK and CoreLog-topK: We simply chose the top $B$ samples ranked by $Q_x$. 
the results shown in Fig.~\ref{fig:b50_diversity} show that 
batch CoreMSE and CoreLog perform much better than the corresponding
CoreMSE-topK and CoreLog-topK, 
which showcases Algorithm~\ref{alg-qrd} can promote batch diversity
that benefits AL for text classification.
The performance difference between Pretrained-LM and batch CoreMSE/CoreLog
indicates that representing each unlabeled sample as a vector of expected changes in scores (i.e., Eq~\eqref{eq-DQS}) is
effective in capturing the information to be used for diversification.

\begin{figure*}[!t]
\centering
        \includegraphics[width=0.8\textwidth]{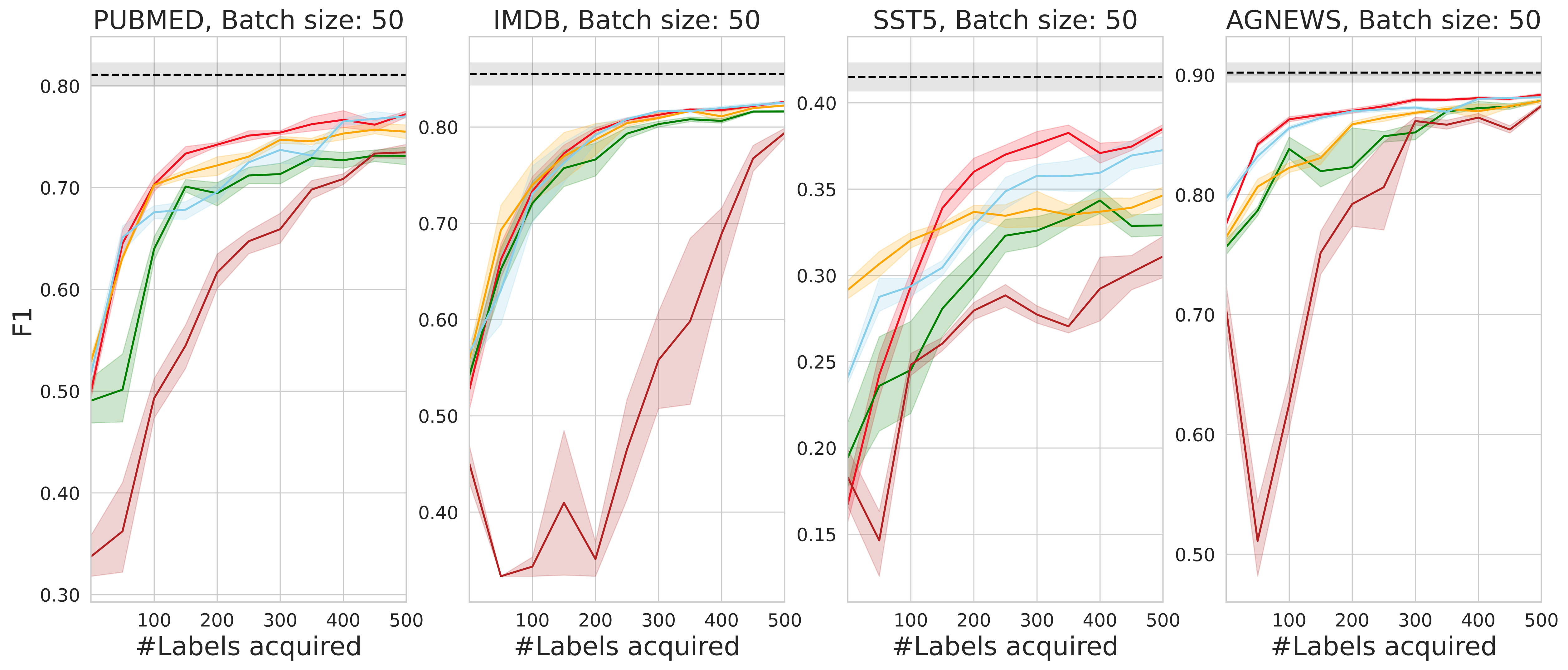}
    \includegraphics[width=0.9\textwidth]{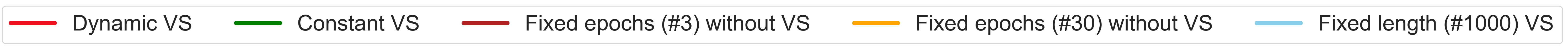}
    \caption{Learning curves of the model training with a dynamic validation set, constant validation set, fixed \# epochs without validation set, fixed length \# labels validation set for CoreMSE. The dashline represents the performance of the backbone classifier trained on the entire dataset.}
    \label{fig:b50_mixture_vs_coremse}
\end{figure*}


\textbf{Dynamic VS.}
We studies how Dynamic VS impacts the ensemble model training by comparing 
batch CoreMSE with Dynamic VS to
its following variations:
1) \textit{3/30 epochs without VS}: ensemble model training without VS and each model is trained for 3 or 30 epochs \cite{dodge2020fine},
2) \textit{Fixed length (\#1000) VS}: a pre-fixed validation set with 1000 labeled samples separate from the labeled pool, used in some existing AL empirical work;
3) \textit{Constant VS}: a variant of Dynamic VS where one random split is generated after each AL iteration and then shared by all the ensemble models.
Fig.~\ref{fig:b50_mixture_vs_coremse} shows 
Dynamic VS gains an advantage after the third acquisition iteration
on PUBMED and SST5.
It is not surprising that 30 epochs without VS and Fixed length VS perform
better in the early acquisition iterations, since they use the whole augmented labeled pool in training DistilBERT, 
whereas CoreMSE used 70\%.
But choosing the number of epochs without a validation set is simply heuristic
otherwise.
Also Fixed length VS is midway between Constant VS and Dynamic VS, indicating the variability in ensembles inherent in the dynamic training sets is a source of improvement.


\textbf{Deep Ensemble vs MC-Dropout}.
CoreMSE and CoreLog use deep ensembles,
and when MC-Dropout is used instead, they are denoted as {\bf CoreMSE-MC} and {\bf CoreLog-MC}.
Fig.~\ref{fig:b50_f1_lc} 
also shows that the performance of CoreMSE-MC and CoreLog-MC with MC-dropout are quite similar to CoreMSE and CoreLog with deep ensembles with the same annotation budget (i.e., 500) and the same batch size (i.e., 50). 
The F1 and accuracy rankings not only closely match between CoreMSE-MC and CoreMSE in deep ensembles but also exhibit a similar trend between CoreLog-MC and CoreLog, as illustrated in Fig.~\ref{fig:divercity_sum_pc}.
This result is unexpected due to the literature on ensembles \cite{durasov2021masksembles}.
\wt{However, 
MC-Dropout achieves similar performance to Deep Ensembles with substantially less training time, 
\ld{as shown
in both \autoref{tab:table5time} and Table~\autoref{tab:table5ensemblestime}.
The training speedup it offers ranges from a 4-fold increase to a 7-fold increase, 
depending on the size of the ensemble.}
\wb{and details on the interaction with ensembling can be found in \autoref{tab:table5ensemblestime}.}}

\nodialogue{
\subsection{Dialogue act classification}
To understand the performance of deep ensembles and MC-Dropout for CoreMSE and CoreLog on the highly imbalanced datasets,
we use the two real-world datasets from educational tutorial dialogues\cite{lin2022good}: 
First-level Tag and Second-level Tag, as shown in \autoref{tab:table2}. 
First-level Tag contains 4K samples of 12 imbalanced classes. Second-level Tag is highly imbalanced dataset which contains 4K samples of 31 imbalanced classes. 
Similar to \autoref{sssec:sectxbaselines}, we consider WMOCU \cite{ZhaoICLR21}, BADGE \cite{ash2019deep}, ALPS \cite{yuan-etal-2020-cold} and random as the four baselines for the batch active learning. 
We deploy the same model DistilBERT as the backbone classifier for classification tasks. 
The sample size of the initial labeled pool is set at 50 for model training.
A batch size of 50 is specified. 
We set the maximum sequence length to 128 and fine-tuned for 30 epoches. 
AdamW optimizer was used with the learning rate of 2e-5 to optimize the training of the classifier. 
All experiments were implemented on RTX 3090 and Intel Core i9 CPU processor. 
We compute pairwise comparison matrix for ranking AL methods according to the method listed in  \autoref{sssec: modelconfiguration}. 
More detailed experimental settings are given in Appendix~B

The results in Fig.~\ref{fig:figure6_lc_sum_pc} demonstrate how those methods perform particularly on a hard classification setting
where classes are highly imbalanced.
Fig.~\ref{fig:figure6_lc_sum_pc} shows both the learning curves and the F1 score pairwise comparison with a batch size 50.
Based on the F1 scoring matrices, both CoreMSE and CoreLog with MC-Dropout have better performance than the deep ensembles CoreMSE and CoreLog. Significantly, CoreLog-MC performs the best among all other methods. This Phenomenon is quite similar to the trend of other imbalance datasets such as SST5 and PUBMED in Fig.~\ref{fig:b50_f1_lc}. 
Thus, we observe that the MC-Dropout active learning effectively compensates for this imbalance during acquisition,
especially for the highly imbalanced datasets \cite{beluch2018power, chen2021imbalanced, li2020imbalance}. 
}


\begin{figure*}[t]
\centering
    \begin{subfigure}[t]{0.45\textwidth}
        \centering
        \includegraphics[width=1\textwidth]{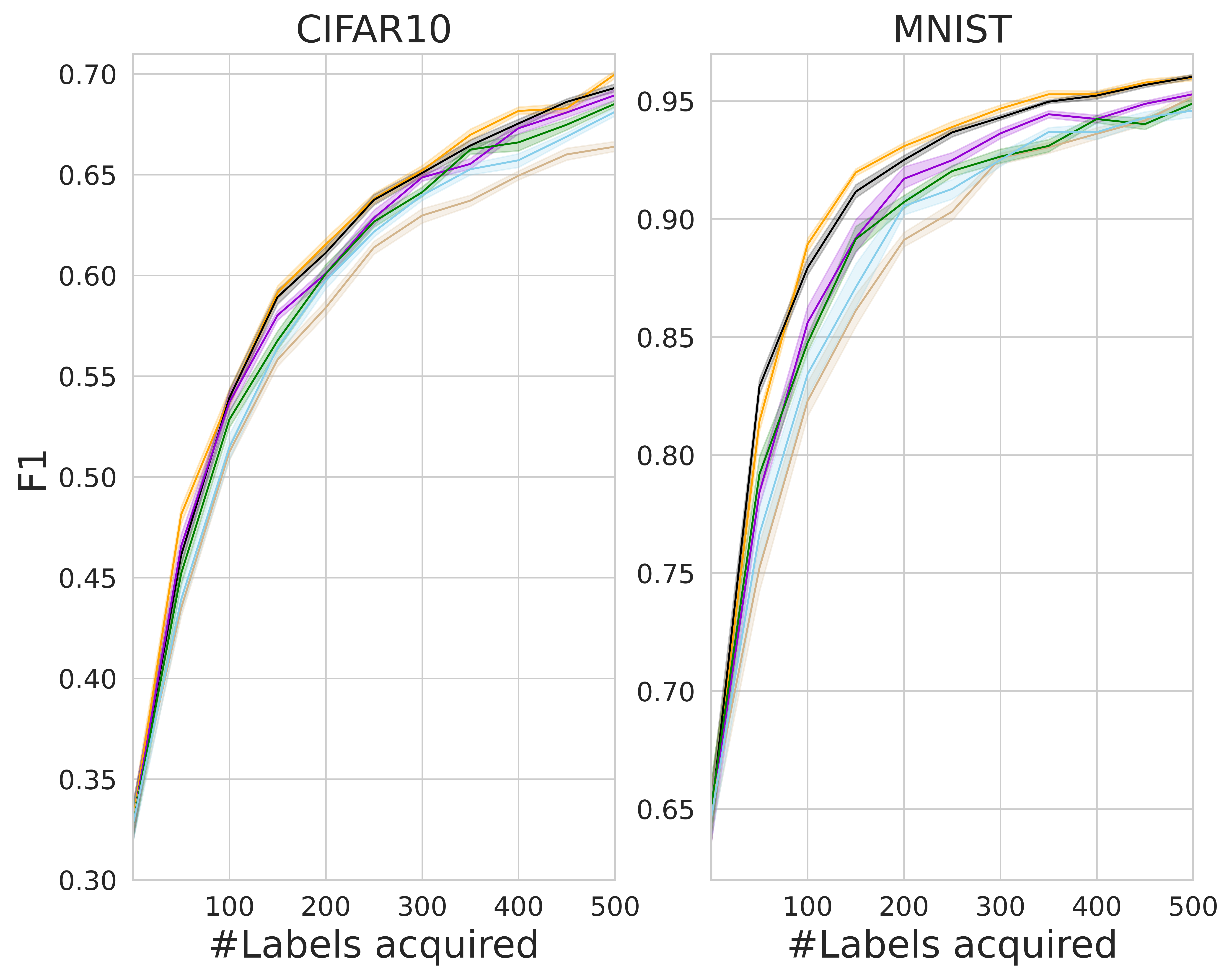}
        \caption{Learning curves of batch size 50}
          \label{fig:figure8_lc_sum_pc_a}
    \end{subfigure}%
    ~ 
    \begin{subfigure}[t]{0.45\textwidth}
        \centering
        \includegraphics[width=1\textwidth]{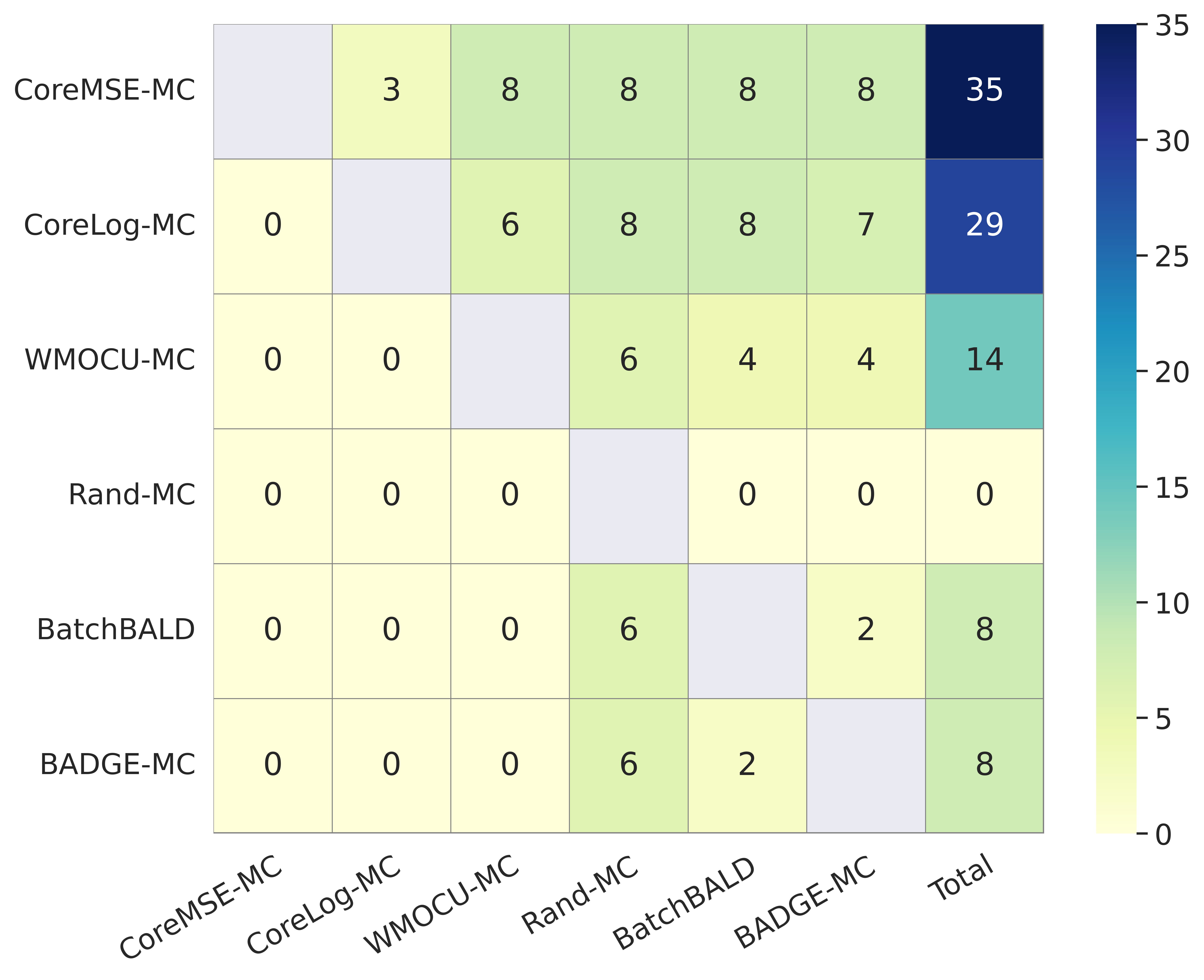}
        \caption{F1-based pairwise comparison}
          \label{fig:figure8_lc_sum_pc_b}
    \end{subfigure}
    \vspace{-2mm}
    \includegraphics[width=0.85\textwidth]{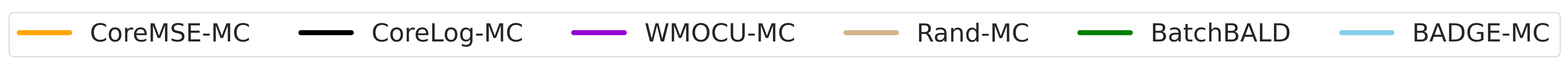}
    \caption{Learning curves and pairwise comparison matrices of batch size 50 for the image datasets based on F1 scores. 
    }
    \label{fig:figure8_lc_sum_pc}
\end{figure*}

\begin{table}[!t]
  \caption{Average training (T) and query (Q) time in seconds, measured after acquiring 500 samples with a batch size of 50 and using 5 ensembles. The times are calculated using five random seeds for various AL methods on DistilBERT}
  \label{tab:table5time}
  \centering 
    {\color{black}
  \begin{tabular}{lllll}
    \toprule    
     AL method &  AG NEWS & PUBMED & IMDB & SST5\\ 
         \cmidrule{2-5}
          &  T / Q & T / Q  & T / Q  & T / Q \\ 
        \cmidrule{2-5}
    Rand  &     75 / 1 & 75 / 1& 120 / 1& 45/1 \\   
    Max-Ent  & 75 / 319  &75 / 52 & 120 / 74 & 45 / 7 \\
    BALD  &  75 / 319    &75 / 47 &120 / 73 & 45 / 4 \\
    MOCU  &   75 / 327   &75 / 48 &120 / 77 & 45 / 6 \\
    WMOCU  &  75 / 333   &75 / 52 &120 / 78 & 45 / 7 \\  
    BADGE  &   75 / 858   &75 / 115 &120 / 130 & 45 / 22 \\ 
    ALPS &   75 / 609  & 75 / 86& 120 / 116 &  45 / 8\\  
    CoreMSE  & 75 / 320  & 75 / 49 & 120 / 74 & 45 / 5  \\
    CoreLog &  75 / 325    &75 / 50 & 120 / 80& 45 / 8 \\
    CoreMSE-MC   &  15 / 320  & 15 / 49 & 24 / 74 & 9 / 5      \\
    CoreLog-MC  &  15 / 325   & 15 / 50 &  24 / 80 &  9 / 8    \\

    \bottomrule
    
  \end{tabular}
}

\end{table}

\begin{table}[!t]
  \caption{Average training time, query time, and F1 score (F1) for ensembles with batch size 50 after acquiring 500 samples. Evaluated using five seeds on DistilBERT for SST5.}
  \label{tab:table5ensemblestime}

  \centering 
  \resizebox{\columnwidth}{!}{
    {\color{black}
    
  \begin{tabular}{lllll}
    \toprule    
     \textbf{\#Ensembles} &  5 & 10 & 20 & 40\\ 
         \cmidrule{1-5}
     \textbf{Metrics} & T/Q/F1  & T/Q/F1  & T/Q/F1  & T/Q/F1 \\ 
        \cmidrule{1-5}
            \textbf{AL Methods} &   &  &  & \\ 
    CoreMSE  & 45/5/0.385  & 90/15/0.391 & 180/33/0.396 & 360/60/0.403  \\
    CoreLog &   45/8/0.384   &90/18/0.390 &180/37/0.395  & 360/68/0.402   \\
    CoreMSE-MC   & 9/5/0.400  & 9/15/0.402 & 9/33/0.404  & 9/60/0.406      \\
    CoreLog-MC  &  9/8/0.396   & 9/18/0.400 &  9/37/0.402  & 9/68/0.405      \\

    \bottomrule
    
  \end{tabular}
}
}

\end{table}

\subsection{Image classification}
As shown in \autoref{sec:ablation},
CoreMSE-MC and CoreLog-MC can significantly reduce training time while maintaining
the same level of performance as their deep ensemble counterparts. 
Therefore, without losing generality,
we compared our CoreMSE-MC and CoreLog-MC with other baselines for the image classification task.

\subsubsection{Datasets} We used two benchmark datasets for image classification: 
MINST and CIFAR10, as shown in \autoref{tab:table3}. 
MNIST \cite{lecun-mnisthandwrittendigit-2010} consists of 28 greyscale images of handwritten digits ranging from 0 to 9. 
The images were divided into 60K and 10K images for training and testing, respectively. 
CIFAR10 \cite{krizhevsky2009learning} consists of 32 × 32 coloured images in 10 classes. Each of the 10 classes has 6K images. 
The training and the test sets consist of 50K and 10K images, respectively. 
These two image datasets have emerged as the preferred choice for many researchers when conducting initial benchmarking of neural networks in image classification tasks.

\subsubsection{Baselines} We considered WMOCU \cite{ZhaoICLR21}, BADGE \cite{ash2019deep} and BatchBALD \cite{KAJvAYG2019} together with a random baseline. 
\ldr{As we utilized those acquisition functions within the context of Algorithm~\ref{alg-ensemble} with MC-Dropout,}
we denote these variants as CoreMSE-MC, CoreLog-MC, WMOCU-MC, BADGE-MC, and Random-MC.
Furthermore, we have also included BatchBALD \cite{KAJvAYG2019} into our baselines as an AL method 
that extends BALD \cite{Houlsby2011} to the batch setting while using MC-Dropout\cite{gal2016dropout}.
Specifically, we followed \cite{KAJvAYG2019} to choose top $B$ informative samples based on their ranked mutual information, which BALD uses to measure how well labeling those samples would improve the the model parameters.
\ldr{We deliberately excluded the ALPS method due to its reliance on a pretrained language model, making it unsuitable for image classification tasks.}





\subsubsection{Model configuration}

\ldr{For the MNIST dataset, we employed a Bayesian Neural Network (BNN) as the backbone classifier. 
BNNs offer enhanced scalability for image data compared to conventional neural networks. We opted for MC-Dropout as the variational approximation method for our ensemble models, primarily due to its effective scalability when dealing with large models and datasets, as indicated in \cite{gal2017deep}.
For the CIFAR10, we utilized PyTorch's \cite{paszke2017automatic} pretrained VGG-16bn,
which incoporates a dropout layer preceding a fully connected layer with 512 hidden units. 
We conducted the experiment over five trials, using five MC-Dropout samples, each with varying batch sizes ($B$ values of 5, 10, 50, and 100). 
All the neural networks were trained for 30 epochs, utilizing the Adam optimizer with an initial learning rate set to 1e-4. 
All experiments were conducted on a workstation equipped with an RTX 3090 GPU and an Intel Core i9 CPU processor.}

\subsubsection{Comparative performance metrics}
Using the performance comparison metrics discussed in \autoref{sssec: modelconfiguration}, 
we generated a pairwise comparison matrix to compare CoreMSE-MC and CoreLog-MC with other baselines. 
\ldr{
The maximum value within each cell of this matrix is $2 \times 4 = 8$, 
reflecting the combination of two datasets and four distinct batch sizes. 
To clarify, if one method consistently outperforms another across both datasets and all four batch sizes, the corresponding cell value will be 8.}

\begin{figure*}[t]
\centering
    \begin{subfigure}[t]{0.39\textwidth}
        \centering
        \includegraphics[width=1\textwidth]{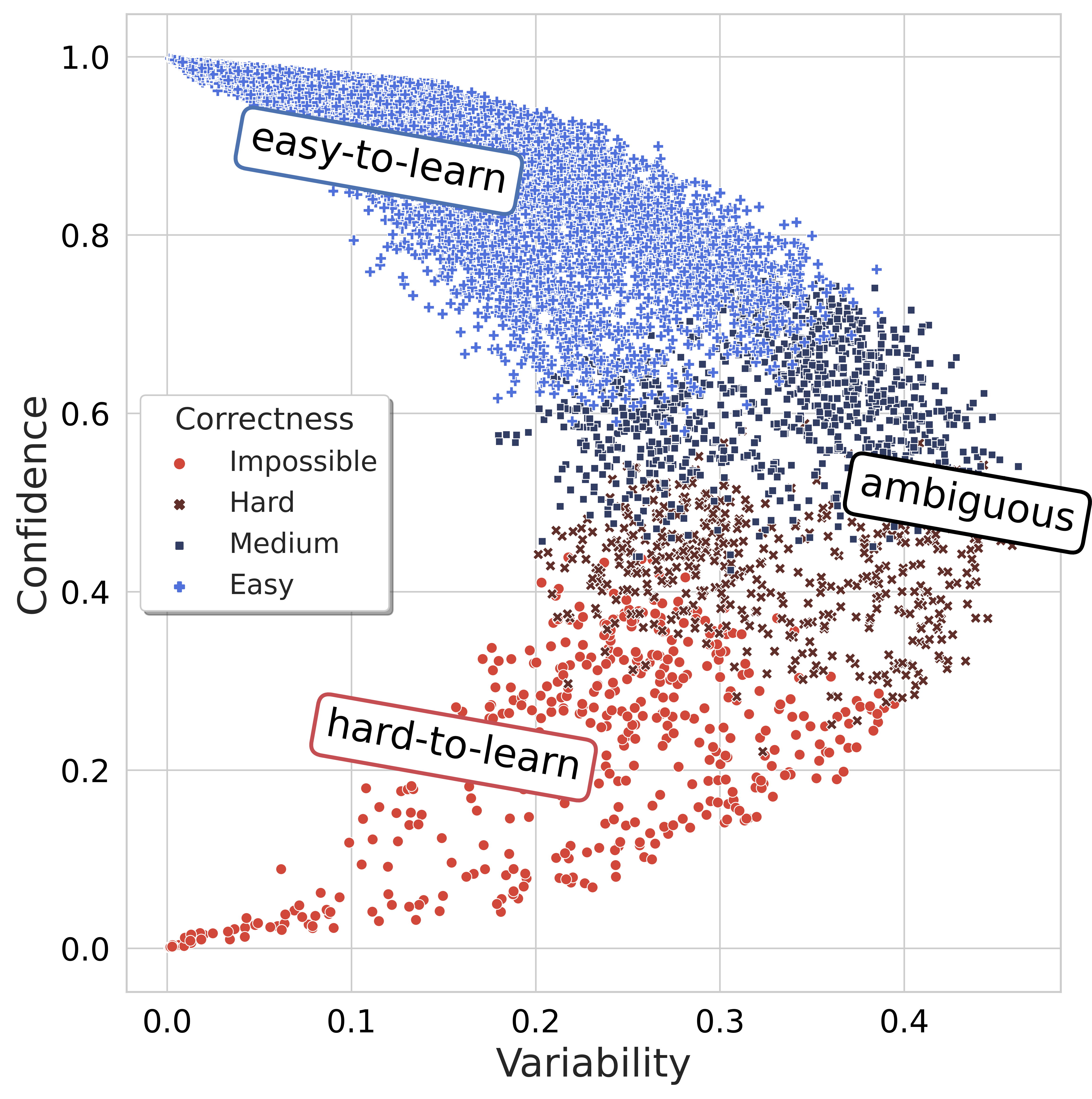}
        \caption{Data Maps}
        \label{fig:datamap_textclassification_a}
    \end{subfigure}%
      \begin{subfigure}[t]{0.59\textwidth}
        \centering
        \includegraphics[width=1\textwidth]{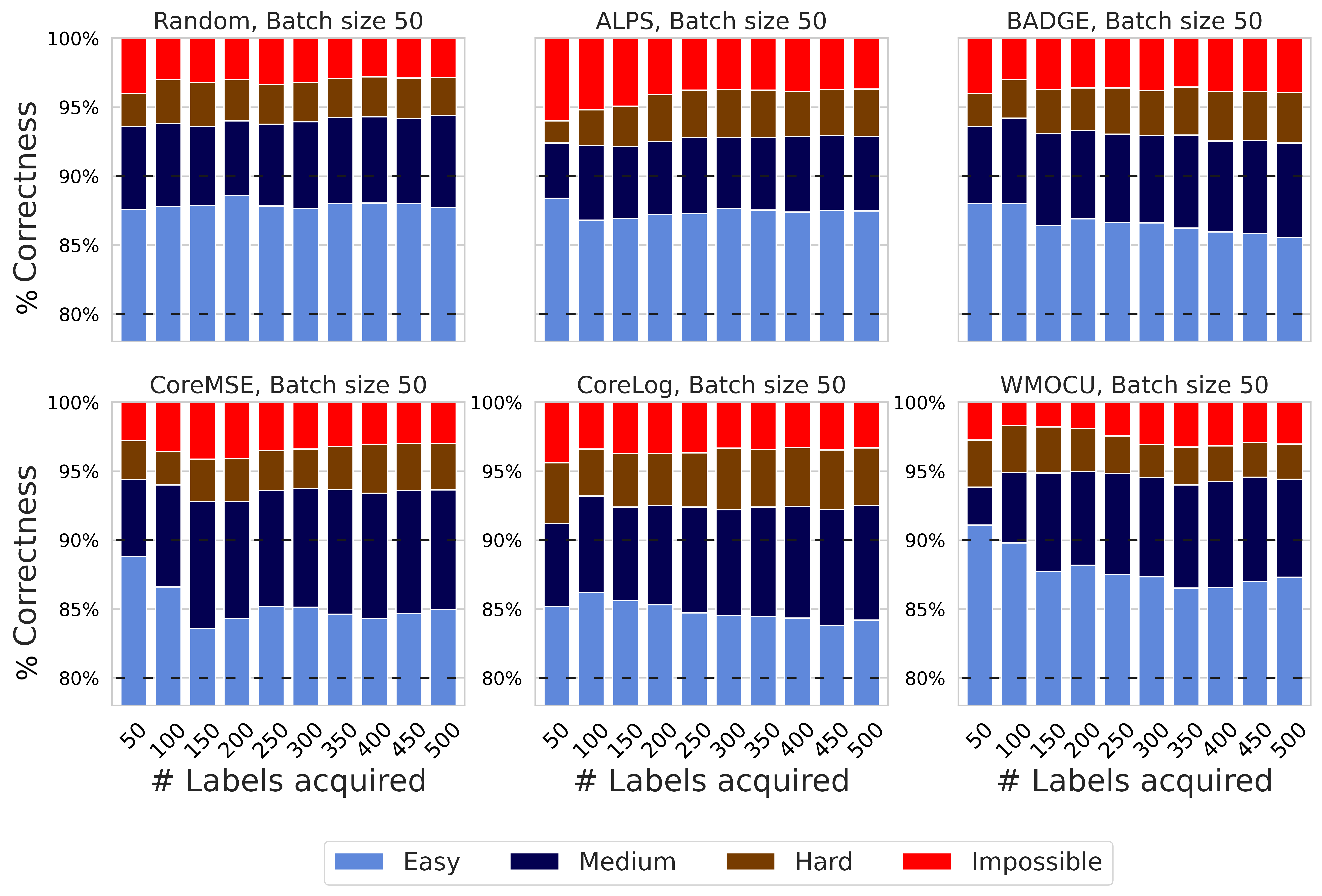}
        \caption{Sampling distribution of ALs with batch size 50 per iteration}
        \label{fig:datamap_textclassification_b}
    \end{subfigure}
    \caption{Learnability of ALs on PUBMED. 
    }
    \label{fig:datamap_textclassification}
\end{figure*}

\subsubsection{Result and Analysis}

\ldr{
As depicted in Fig.~\ref{fig:figure8_lc_sum_pc}, we can observe the performance of these methods, derived on the two image classification datasets. 
The learning curves presented in Fig.~\ref{fig:figure8_lc_sum_pc_a} exhibit similar trends to those observed in balanced datasets such as IMDB and AG NEWS in text classification.}
CoreMSE-MC and CoreLog-MC outperforms all other baselines,
\ldr{with WMOCU-MC being the next most competitive approach, surpassing BatchBALD and BADGE-MC.}
The random baseline consistently performs the worst among all the methods.
\ldr{In addition,
The comparative matrix shown in Fig.~\ref{fig:figure8_lc_sum_pc_b}
supports the observation that CoreMSE-MC and CoreLog-MC significantly outperform the other methods.
Consequently, the scoring functions integrated within the BEMPS framework consistently deliver superior performance compared to alternative methods.}

\begin{figure*}[!t]
\centering
    \centering
    \includegraphics[width=0.8\textwidth]{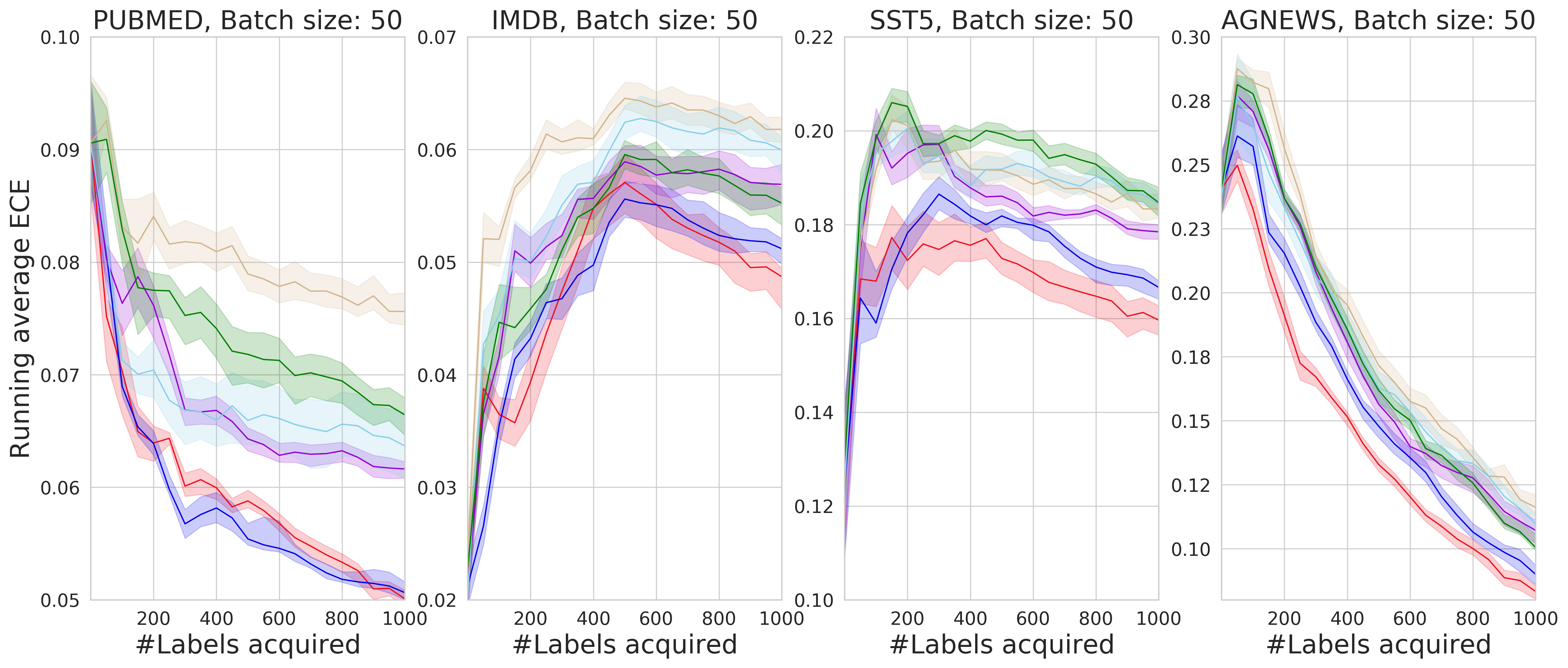}
    \includegraphics[width=0.8\textwidth]{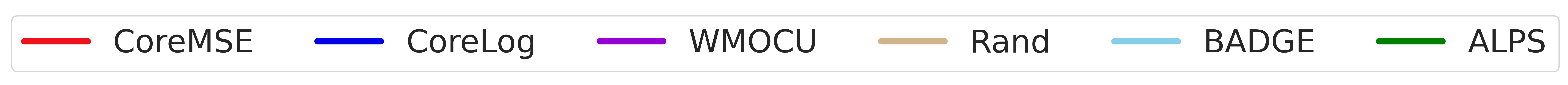}
    \caption{Running average ECE for PUBMED, IMDB, SST5 and AG NEWS.
    }
    \label{fig:runningaverageECE}
\end{figure*}

\begin{figure}[!t]
\centering
        \includegraphics[width=0.485\textwidth]{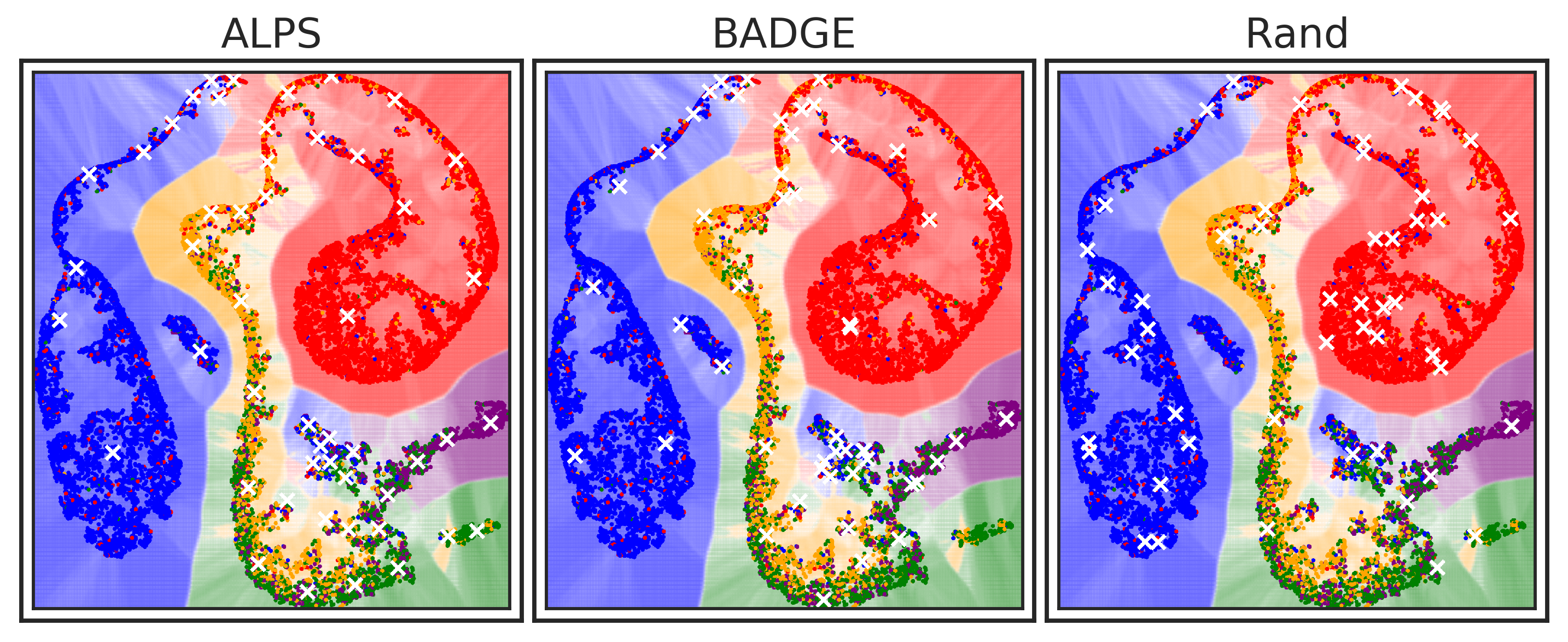}
    \includegraphics[width=0.485\textwidth]{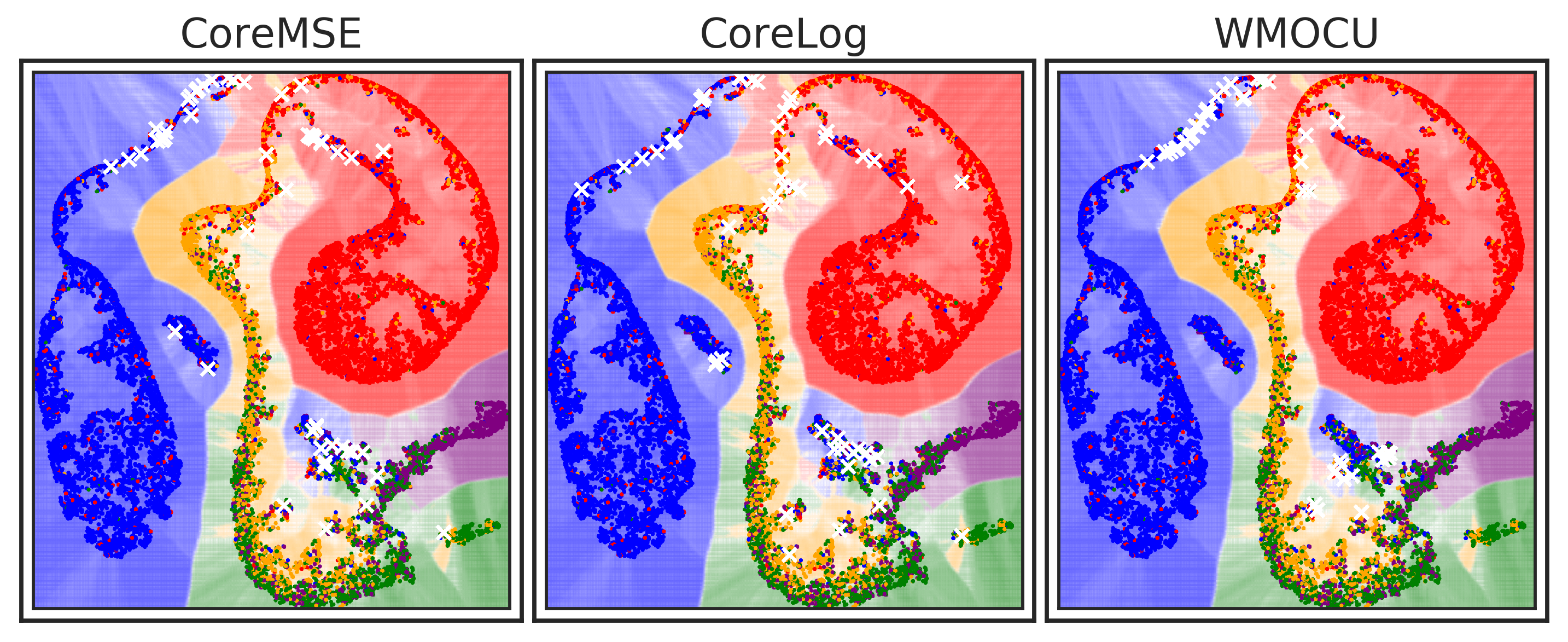}
    \includegraphics[width=0.485\textwidth]{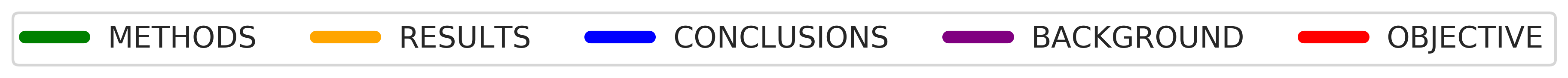}
    \caption{The t-SNE visualization of DistilBERT's decision boundaries for the unlabel pool based on 500 labeled PubMed samples, highlighting 50 specially chosen samples to demonstrate effective diverse batch selection for AL methods.}
    \label{fig:decisionboundary}
\end{figure}

\section{Sampling Behaviours of Different ALs} 
\label{sec:analysis}
\ldr{
In this section, we delved into the sampling behaviours of those AL methods to further understand their performance differences.
We considered
1) using Data Maps \cite{swayamdipta2020dataset} to visualise the sampling behaviours of different AL methods by leveraging the training dynamics;
2) evaluating the calibration of the classifier by computing the expected calibration error method;
and 3) exploring the distribution of selected samples by each AL method along the decision boundary.}

\subsection{Interpretation of Acquisition with Data Maps}
\ldr{
Following \cite{swayamdipta2020dataset}, we created a Data Map for the PUBMED dataset and analyzed the sample selection patterns of different AL methods.
Specifically, 
we used the contextualisation method based on statistics derived from the behavior of the training process, i.e., training dynamics.}

As defined in \cite{swayamdipta2020dataset},
the training dynamics of sample $(\vb{x_i},y_i)$ is defined as statistics derived from $N$ epochs, which is then used to generate coordinates on our map.
The first measure attempts to capture the confidence with which the learner gives the true label to an observation, given its probability distribution.
Confidence is the average model probability of the true label over all epochs: 
$\hat{\mu_i} = \frac{1}{N} \sum_{n = 1}^N p(y_i | \vb*\theta^{(n)}, \vb{x}_i)$ where $\vb*\theta^{(n)}$ indicates the parameters of the classifier derived at the end of the $n^{th}$ epoch, 
and $y_i$ indicates the true label.
The variability measures the spread of $p(y_i | \vb*\theta^{(n)}, \vb{x}_i)$ across epochs: $\hat{\sigma_i} =  \sqrt{\frac{\sum_{n=1}^N (p(y_i | \vb*\theta^{(n)}, \vb{x}_i) - \hat{\mu_i})^2} {N}}$. 
We also considered the correctness by calculating the fraction of instances where the model correctly labels 
$\vb{x}_i$ over multiple epochs.
According to these three dimensions, 
we map these training samples along two axes: the $y$-axis indicates the average model confidence, and the $x$-axis indicates the variability of the samples. 
The map is visualised as a 2D representation of a dataset where samples are distributed on a map based on statistics indicating their ``learnability." 


Both confidence and variability were used to cluster samples into three different regions based on their properties as follows: 1) ``\textbf{easy-to-learn}" samples are those having high confidence and low variability. The classifier consistently predicts those samples accurately; 
2)``\textbf{ambiguous}" samples are those with high variability. In other words,
        the predicted probability of their true classes frequently change during training. Therefore, the classifier is indecisive about those samples;
3) ``\textbf{hard-to-learn}" samples are those with low confidence and low variability, so the classifier often results in poor performance on those samples.
On top of the three regions,
we further categorize the samples into the four types inspired by \cite{karamcheti2021mind} to explore their distribution changes in different AL methods during the sampling iterations.
The correctness score categorizes these four types from 0 to 1: Easy ([0.00, 0.25));
Medium ([0.25, 0.50)); Hard ([0.50, 0.75)); and Impossible ([0.75, 1.00)). 

Fig.~\ref{fig:datamap_textclassification_b} shows CoreMSE and CoreLog acquired more samples in Medium, Hard and Impossible samples than other AL methods over the sampling iterations. 
Thus, CoreMSE and CoreLog can efficiently acquire samples from ``ambiguous" and ``hard-to-learn" regions in Fig.~\ref{fig:datamap_textclassification_a}. 



\subsection{Expected Calibration Error}
Expected Calibration Error (ECE) evaluates the correspondence between predicted probability and empirical accuracy \cite{10.5555/2888116.2888120}.
It places the model's predicted accuracy against its empirical accuracy into bins to estimate the calibration error \cite{pmlr-v70-guo17a, 10.5555/2888116.2888120}.
ECE utilises equal-mass bins and selects the maximum number of bins while maintaining monotonicity in the calibration function. 
Specifically, $ECE = \sum_{m=1}^M \frac{|B_m|}{n} |accuracy(B_m) - confidence(B_m)| $ 
where $n$ is the number of samples and $M$ is interval bins.
The difference of accuracy and confidence values for a set of bins reflects the calibration gap.

\ld{
We used ECE to 
measure how well our text classifier is calibrated.
To do so, we plotted the running average of ECE after each epoch
in Fig.~\ref{fig:runningaverageECE}, where the bin size $M$ was set to 10.
Those plots show that the text classifiers trained with either CoreMSE or CoreLog 
yield better calibration, which can be attributed to that minimising a proper scoring rule should lower ECE \cite{doi:10.1198/016214506000001437}.
It is noteworthy that the classifiers trained using CoreMSE consistently achieved
the lowest ECE across all four text datasets.
This observation suggests that if the primary goal is to improve calibration,
CoreMSE may be the more favourable choice compare to CoreLog.
}

\begin{figure}[!t]
\centering
        \includegraphics[width=0.485\textwidth]{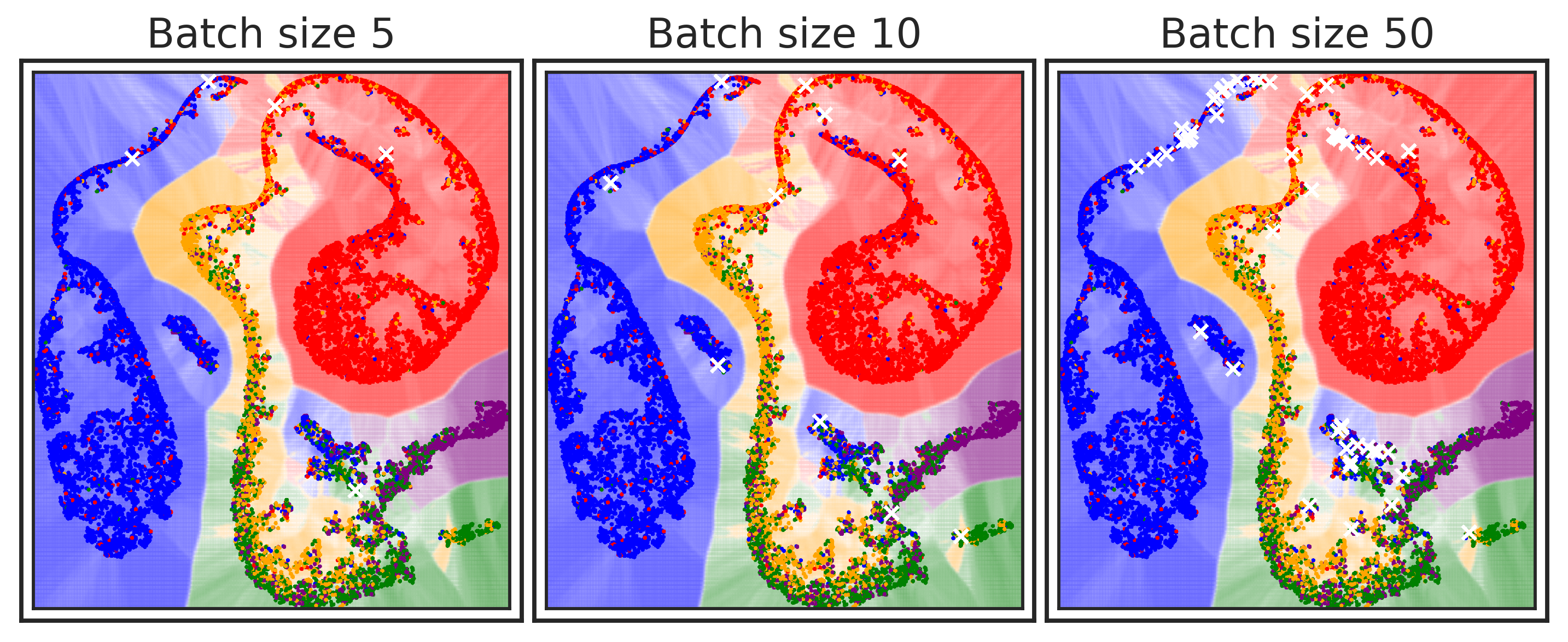}
    \includegraphics[width=0.485\textwidth]{samples/Figures/diversity/pubmed_dlegend.png}
    \caption{t-SNE plot: CoreMSE sample selection at acquisition sizes $B \in {5, 10, 50}$, based on 500 labeled PubMed samples. White crosses indicate acquired samples.}
    \label{fig:decisionboundary2}
\end{figure}

\subsection{Distribution of Acquired Samples}
\wt{
Query strategies focusing exclusively on uncertainty
tend to select similar labels that are close to the decision boundary, which can negatively impact learning outcomes. 
In the t-SNE plots shown in Fig.~\ref{fig:decisionboundary}, 
this issue is clearly illustrated with a white cross marking each selected sample and colored dots indicating their true labels. 
WMOCU, 
which centers on high-uncertainty regions, 
fails to adequately diversify its sample selections, thereby introducing bias and inefficiency. In contrast, ALPS and BADGE opt for a more varied sample set, effectively covering the feature space. 
Similarly, CoreMSE and CoreLog achieve an good balance between selecting uncertain samples and ensuring diversity, highlighting the role diversity plays in the success of AL methods. 
For smaller batch sizes, CoreMSE tends to pick samples with the highest uncertainty from varied cluster regions. However, as the batch size increases, there's a tendency for the algorithm to \ld{repeatedly select samples from the same region, which can convey similar information with respect to the decision boundaries,}
as illustrated in Fig.~\ref{fig:decisionboundary2}. 
In contrast, smaller batch sizes excel at choosing  less-redundant samples from the unlabeled pool in each acquisition round, effectively making their cumulative selections more informative for learning the classifiers.}

\section{Conclusion}
\ld{
We proposed the BEMPS framework for active learning, focusing on 
acquisition functions that are developed based
on strictly proper scoring rules \cite{doi:10.1198/016214506000001437}.
These developed acquisition functions specifically aim to address epistemic uncertainty 
that directly influences the model's classification performance.
For this, we developed convergence theory,
borrowing techniques from \cite{ZhaoICLR21},
that we also extended to the earlier BALD acquisition function.
we  instantiated BEMPS with the Brier score and the logarithmic score,
deriving two new acquisition functions, CoreMSE and CoreLog respectively. 
For more efficient and effective evaluation of those two acquisition functions,
we developed two techniques to improve performance,
a batch AL strategy naturally complement to our BEMPS algorithms,
and a dynamic validation-ensemble hybrid that 
generates high scoring ensembles but does not require a separate large labeled validation set.
}
Empirical results have demonstrated CoreMSE and CoreLog outperformed other AL methods on both text and image classification with different batch sizes.



\ld{
We further investigated both deep ensembles and Monte-Carlo dropout in our BEMPS framework,
aiming to speed up the process of retraining the classifiers after each acquisition iteration.
While traditional deep ensemble method can still work on small datasets,
the stochastic ensemble of Monte-Carlo dropout models can significantly reduce the  cost of 
retraining the classifier.
Nevertheless, both deep ensembles and those based on Monte-Carlo dropout can achieve a similar level of classification performance with the annotation budget.}

\ld{
Finally, we analyzed quantitatively the sampling behaviours of CoreMSE and CoreLog in order 
to understand their performance difference. To do so,
we investigated the distribution of selected samples across three different regions created based on the training dynamics with DataMap, and the distribution along the decision boundary with t-SNE.
Moreover, the ECE showed that the classifier trained with CoreMSE and Corelog
are the most calibrated among those trained with the other acquisition strategies.
It is noteworthy that a limitation of BEMPS and its theory is that it does not support the use of unlabeled data within the learning algorithm, for instance, as would be done by state of the art semi-supervised learning methods.
Given that semi-supervised learning and AL really address a common problem,
this represents an area for future work for our techniques.
}
\ifCLASSOPTIONcompsoc
  \section*{Acknowledgments}
\else
  \section*{Acknowledgment}
\fi

The work has been supported by the Tides Foundation through Grant 1904-57761, as part of the
Google AI Impact Challenge, with Turning Point. Wray Buntine’s work was also supported by
DARPA’s Learning with Less labeling (LwLL) program under agreement FA8750-19-2-0501.

\ifCLASSOPTIONcaptionsoff
  \newpage
\fi


%


\bibliographystyle{IEEEtran}
\bibliography{IEEEabrv,references}
%



%

\begin{IEEEbiography}[{\includegraphics[width=1in,height=1.25in,clip]{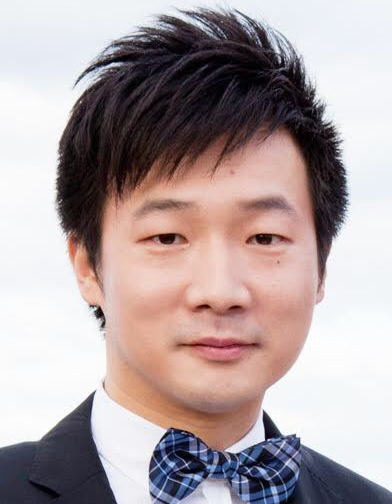}}]{Wei Tan}
Wei Tan is a Doctoral Researcher in the Machine Learning Group at Monash University. He specializes in Active Learning that optimize the labeling budget for the human annotation. He works for the Google Turning point project, which develops a Surveillance System to capture coded ambulance data on SITB, mental health, and AOD attendances to inform policy, strategy, and intervention.
\end{IEEEbiography}
\vfill
\begin{IEEEbiography}[{\includegraphics[width=1in,height=1.25in,clip]{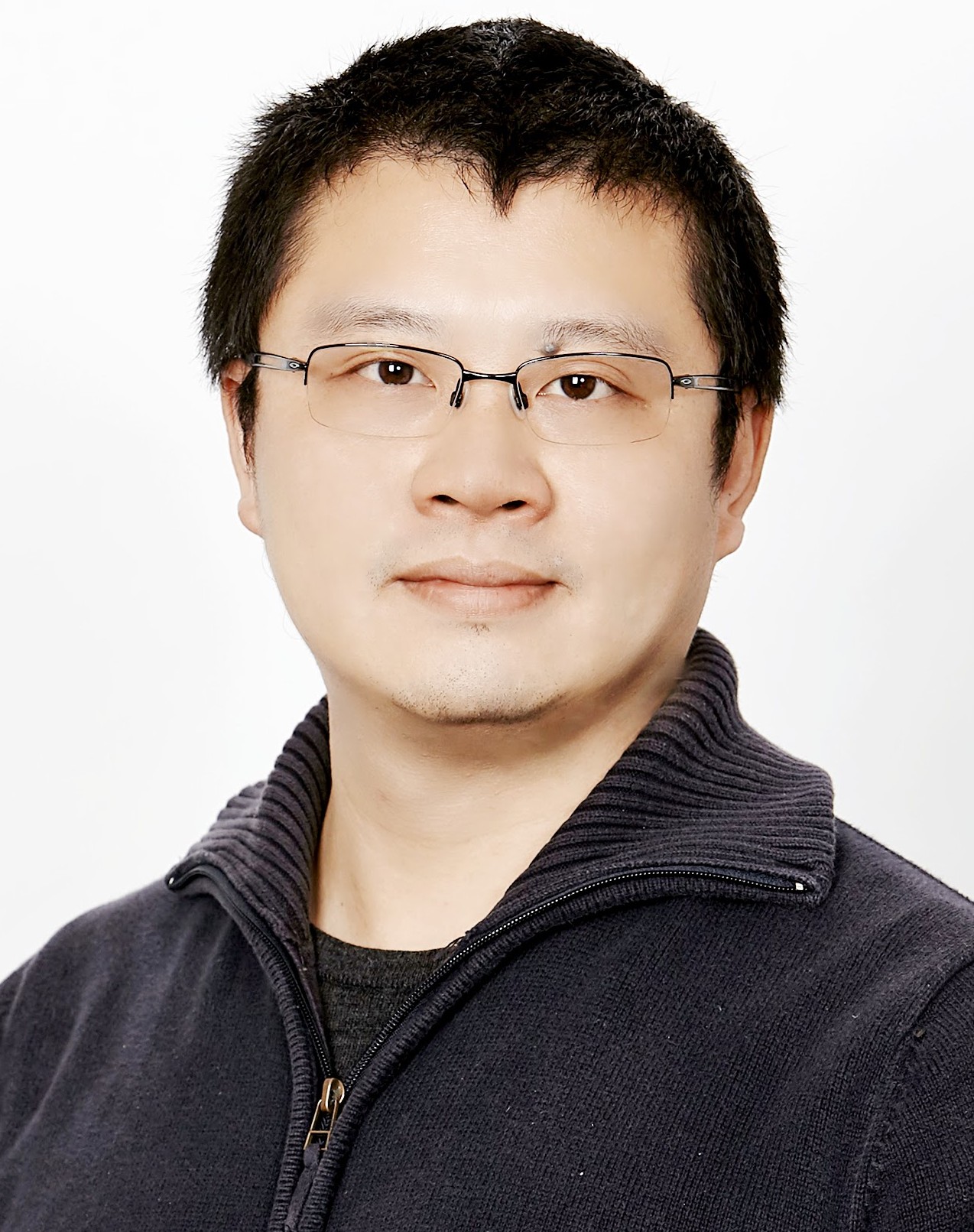}}]{Lan Du}
Dr. Lan Du is currently a senior lecturer in Data Science and AI at Monash University, Australia. His research interest lies in the joint area of machine/deep learning and natural language processing and their cross-disciplinary applications. 
He has published more than 80 high-quality research papers in almost all top conferences/journals.
He has been serving as an editorial board member of the Machine Learning journal and ACM Transactions on Probabilistic Machine Learning, an area chair \& a senior PC of AAAI.
\end{IEEEbiography}


\begin{IEEEbiography}[{\includegraphics[width=1in,height=1.25in,clip]{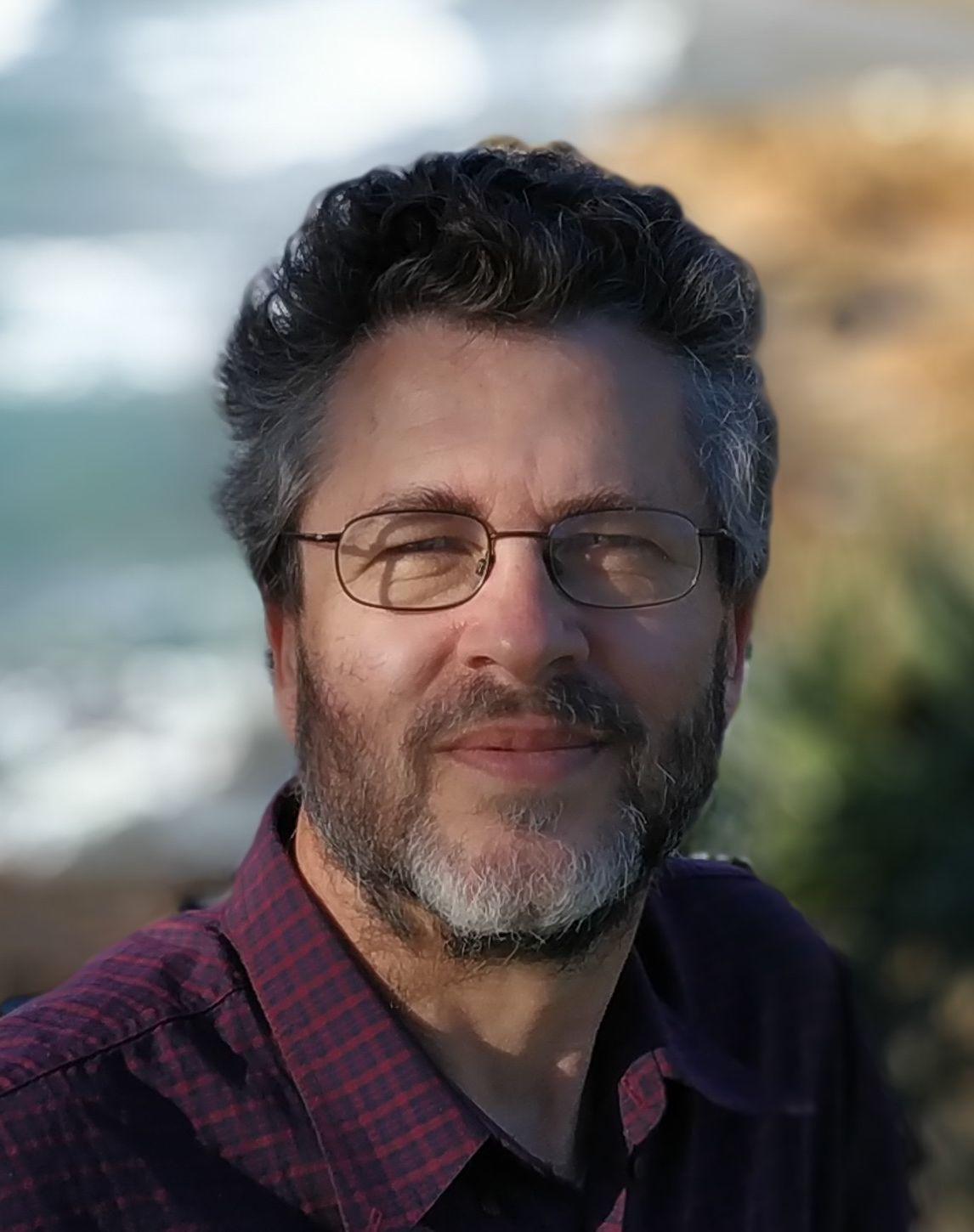}}]{Wray Buntine}
Wray Buntine is full professor and Director of the Computer Science Program at VinUniversity in Hanoi, and was previously Director of the Machine Learning Group at Monash University. He is known for his theoretical and applied work in probabilistic methods for document and text analysis, with over 200 academic publications, 2 patents and some software products.
\end{IEEEbiography}


\vfill



\end{document}